\icmltitlerunning{Provable Defenses via the Convex Outer Adversarial Polytope}
\DeclareMathOperator*{\maximize}{maximize}
\DeclareMathOperator*{\minimize}{minimize}
\DeclareMathOperator*{\subjectto}{subject\;to}
\DeclareMathOperator*{\for}{for}
\newlength\myindent
\newtheorem*{theorem*}{Theorem}
\newtheorem{theorem}{Theorem}
\newtheorem{corollary}{Corollary}
\newtheorem{property}{Property}
\begin{document}

\twocolumn[
\icmltitle{Provable Defenses against Adversarial Examples \\
           via the Convex Outer Adversarial Polytope}



\icmlsetsymbol{equal}{*}

\begin{icmlauthorlist}
\icmlauthor{Eric Wong}{mld}
\icmlauthor{J. Zico Kolter}{csd}
\end{icmlauthorlist}

\icmlaffiliation{mld}{Machine Learning Department, Carnegie Mellon University, Pittsburgh PA, 15213, USA}
\icmlaffiliation{csd}{Computer Science Department, Carnegie Mellon University, Pittsburgh PA, 15213, USA}

\icmlcorrespondingauthor{Eric Wong}{ericwong@cs.cmu.edu}
\icmlcorrespondingauthor{J. Zico Kolter}{zkolter@cs.cmu.edu}

\icmlkeywords{adversarial, robust, relu, deep network, provably, convex, adversarial polytope}

\vskip 0.3in
]



\printAffiliationsAndNotice{}  

\begin{abstract}
  We propose a method to learn deep ReLU-based classifiers that are provably
  robust against norm-bounded adversarial perturbations on the training data.
  For 
  previously unseen examples, the approach is guaranteed to detect all
  adversarial examples, though it may flag some non-adversarial examples as
  well. The basic idea is to consider \emph{a convex outer
  approximation} of the set of activations reachable through a
  norm-bounded perturbation, and we develop a robust optimization procedure that
  minimizes the worst case loss over this outer region (via a linear program).
  Crucially, we show that the dual problem to this linear program can be
  represented itself as a deep network similar to the backpropagation network,
  leading to very efficient optimization approaches that produce guaranteed
  bounds on the robust loss.
  The end result is that by executing a few more forward and backward passes
  through a slightly modified version of the original network (though possibly
  with much larger batch sizes), we can learn a classifier that is provably
  robust to \emph{any} norm-bounded adversarial attack.  We illustrate the
  approach on a number of tasks to train classifiers with robust adversarial 
  guarantees (e.g. for MNIST, we produce a convolutional classifier that provably
  has less than 5.8\% test error for any 
  adversarial attack with bounded $\ell_\infty$ norm less than $\epsilon = 0.1$),
and code for all 
experiments is available at
\url{http://github.com/locuslab/convex_adversarial}.

\end{abstract}

\section{Introduction}

Recent work in deep learning has demonstrated the prevalence of
\emph{adversarial examples}
\citep{szegedy2014intriguing,goodfellow2015explaining}, data points fed to 
a machine learning algorithm 
which are visually indistinguishable from ``normal'' examples, but which are
specifically tuned so as to fool or mislead the machine learning system.
Recent history in adversarial classification has followed something of a virtual 
``arms race'': practitioners alternatively design new ways of hardening
classifiers against existing attacks, and then a new class of
attacks is developed that can penetrate this defense.  Distillation
\citep{papernot2016distillation} was effective at preventing adversarial examples
until it was not \citep{carlini2017towards}.  There was no need to worry about
adversarial examples under ``realistic'' settings of rotation and scaling
\citep{lu2017no} until there was \citep{athalye2017synthesizing}.  
Nor does
the fact that the adversary lacks full knowledge of the model appear to be a
problem: ``black-box'' attacks are also extremely effective
\citep{papernot2017practical}.  Even detecting the presence of adversarial
examples is challenging \cite{metzen2017detecting,carlini2017adversarial}, and
attacks are not limited to  synthetic examples, having been
demonstrated repeatedly on real-world objects
\cite{sharif2016accessorize,kurakin2016adversarial}. 
Somewhat memorably, many of the adversarial defense papers 
at the most recent ICLR conference were broken prior to the review 
period completing \citep{obfuscated-gradients}. 

Given the potentially
high-stakes nature of many machine learning systems, we feel this situation is
untenable: the ``cost'' of having a classifier be fooled just once is
potentially extremely high, and so the attackers are the de-facto ``winners'' of
this current game.  Rather, one way to truly harden 
classifiers against 
adversarial attacks is to design classifiers that are \emph{guaranteed} to be robust to
adversarial perturbations, even if the attacker is given full knowledge of
the classifier.  
Any weaker attempt of ``security through obscurity''  
could ultimately prove unable to provide a robust classifier.
  

In this paper, we present a method for training \emph{provably robust}
deep ReLU classifiers, classifiers that are guaranteed to be robust against any
norm-bounded adversarial perturbations on the training set.  The approach also
provides a provable method for detecting \emph{any previously unseen} adversarial example, with
zero false negatives (i.e., the system will flag
any adversarial example in the test set, though it may also mistakenly flag some
non-adversarial examples).  The crux of our approach is to construct a
\emph{convex outer bound} on the so-called ``adversarial polytope'', the set of
all final-layer activations that can be achieved by applying a norm-bounded
perturbation to the input; if we can guarantee that the class prediction of an
example does not change within this outer bound, we have a proof that the
example could not be adversarial (because the nature of an adversarial example
is such that a small perturbation changed the class label).  We show how we can
efficiently compute and optimize over the ``worst case loss'' within this convex
outer bound, even in the case of deep networks that include relatively large 
(for verified networks) convolutional layers, and thus learn
classifiers that are provably robust to such perturbations.  From a technical
standpoint, the outer bounds we consider involve a large linear
program, but we show how to bound these optimization problems using a
formulation that computes a feasible dual solution to this linear program using
just a single backward pass through the network (and avoiding any actual 
linear programming solvers).

Using this approach we obtain, to the best of our knowledge,
by far the largest verified networks to date, with
provable guarantees of their performance under adversarial perturbations.  We
evaluate our approach on classification tasks such as human activity
recognition, MNIST 
digit classification, ``Fashion MNIST'', and street view housing
numbers.  In the case of MNIST, for example, we produce a convolutional
classifier that provably  has less than 5.8\% test error for any adversarial
attack with bounded $\ell_\infty$ norm less than $\epsilon = 0.1$.

\section{Background and Related Work}

In addition to general work in adversarial attacks and defenses, our work
relates most closely to several ongoing thrusts in adversarial examples.  First,
there is a great deal of ongoing work using exact (combinatorial) solvers to
verify properties of neural networks, including robustness to adversarial
attacks.  These typically employ either Satisfiability Modulo Theories (SMT) solvers
\cite{huang2017safety,katz2017reluplex,ehlers2017formal,carlini2017ground} or
integer programming approaches
\cite{lomuscio2017approach,tjeng2017verifying,cheng2017maximum}.  Of particular
note is the PLANET solver \cite{ehlers2017formal}, which also uses linear ReLU
relaxations, though it employs them just as a sub-step in a larger combinatorial
solver.  The obvious advantage of these approaches is that they are able to
reason about the \emph{exact} adversarial polytope, but because they are
fundamentally combinatorial in nature, it seems prohibitively difficult to scale
them even to medium-sized networks such as those we study here.  In addition,
unlike in the work we present here, the verification procedures are too
computationally costly to be integrated easily to a robust training
procedure.

The next line of related work are methods for computing \emph{tractable} bounds
on the possible perturbation regions of deep networks.  For example, Parseval
networks \cite{cisse2017parseval} attempt to achieve some degree of adversarial
robustness by 
regularizing the $\ell_2$ operator norm of the weight matrices (keeping the
network non-expansive in the $\ell_2$ norm); similarly, the work by
\citet{peck2017lower} shows 
how to limit the possible layerwise norm expansions in a variety of different
layer types.  In this work, we study similar ``layerwise'' bounds, and show that 
they are typically substantially (by many orders of magnitude) worse than the
outer bounds we present.

Finally, there is some very recent work that relates substantially to this
paper.  \citet{hein2017formal} provide provable robustness guarantees for
$\ell_2$ perturbations in two-layer networks, though they train their models
using a surrogate of their robust bound rather than the exact bound.
\citet{sinha2018certifiable} provide a method for achieving certified robustness
for perturbations defined by a certain distributional Wasserstein distance.
However, it is not clear how to translate these to traditional norm-bounded 
adversarial models (though, on the other hand, their approach also provides
generalization guarantees under proper assumptions, which is not something we
address in this paper).

By far the most similar paper to this work is the concurrent work of 
\citet{raghunathan2018certified}, who develop a semidefinite programming-based relaxation of the
adversarial polytope (also bounded via the dual, which reduces to an eigenvalue
problem), and employ this for training a robust classifier. 
However, their approach applies only to two-layer networks, and only to fully
connected networks, whereas our method applies to deep networks with arbitrary
linear operator layers such as convolution layers.  Likely due to this fact, we
are able to significantly outperform their results on medium-sized problems:
for example, whereas they attain a guaranteed robustness bound of 35\% error on
MNIST, we achieve a robust bound of 5.8\% error.  However, we also note that when we
\emph{do} use the smaller networks they consider, the bounds are complementary
(we achieve lower robust test error, but higher traditional test error); this
suggests that finding ways to combine the two bounds will be useful as a future
direction.

\begin{figure*}[t]
  \begin{center}
    \includegraphics[scale=0.4]{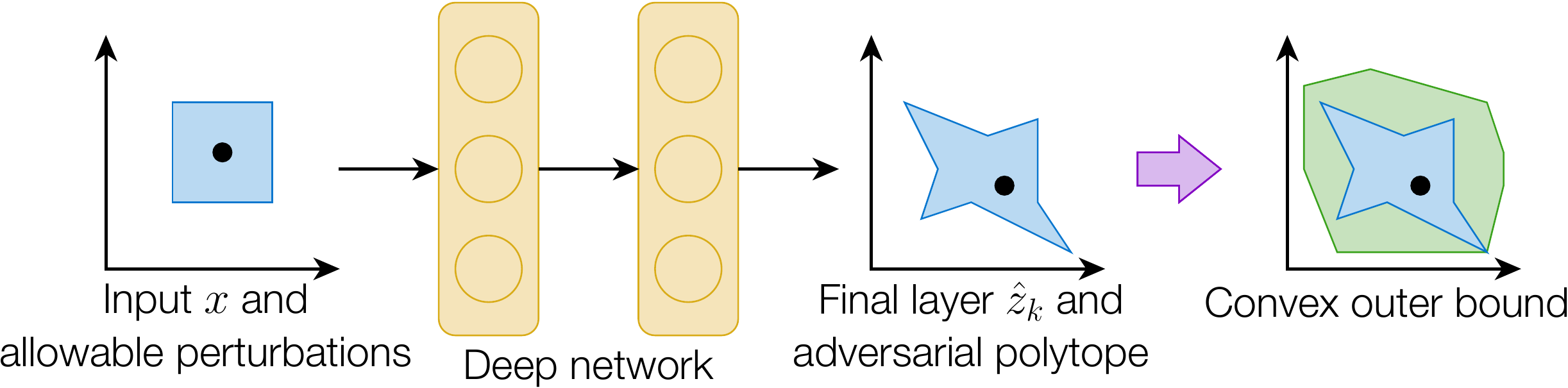}
    \vspace{-0.1in}
    \caption{Conceptual illustration of the (non-convex) adversarial
      polytope, and an outer convex bound.}
    \label{fig-outer-bound}
  \end{center}
  \vskip -0.2in
\end{figure*}

Our work also fundamentally relates to the field of robust optimization
\cite{ben2009robust}, the task of solving an optimization problem where some of
the problem data is unknown, but belong to a bounded set.  Indeed, robust optimization techniques have been used in the context of linear
machine learning models \citep{xu2009robustness} to create classifiers that are
robust to perturbations of the input.  This connection was addressed in the
original adversarial examples paper \cite{goodfellow2015explaining}, where it
was noted that for linear models,
robustness to adversarial examples can be achieved via an $\ell_1$ norm 
penalty on the weights within the loss function.\footnote{This fact is well-known in robust optimization, and we merely mean
that the original paper pointed out this connection.} \citet{madry2017towards}
revisited this connection to robust optimization, and noted that simply solving
the (non-convex) min-max formulation of the robust optimization problem works
very well in practice to find and then optimize against adversarial examples.
Our work can be seen as taking the next step in this connection between 
adversarial examples and robust optimization.  Because we consider a convex
relaxation of the adversarial polytope, we can incorporate the theory from
convex robust optimization and provide \emph{provable} bounds on the potential
adversarial error and loss of a classifier, using the specific form of
dual solutions of the optimization problem in question without relying on any
traditional optimization solver.

\section{Training Provably Robust Classifiers}

This section contains the main methodological contribution of our paper: a
method for training deep ReLU networks that are provably robust to norm-bounded
perturbations.  Our derivation roughly follows three steps: first, we define
the adversarial polytope for deep ReLU networks, and present our convex outer
bound; second, we show how we can efficiently optimize over this bound by
considering the \emph{dual problem} of the associated linear program, and
illustrate how to find solutions to this dual problem using a single 
modified backward pass 
in the original network; third, we show how to
incrementally compute the necessary elementwise upper and lower activation
bounds, using this dual approach.  After presenting this algorithm, we then
summarize how the method is applied to train provably robust classifiers, and
how it can be used to detect potential adversarial attacks on previously unseen
examples.

\subsection{Outer Bounds on the Adversarial Polytope}
In this paper we consider a $k$ layer feedforward ReLU-based neural network,
$f_\theta : \mathbb{R}^{|x|} \rightarrow \mathbb{R}^{|y|}$
given by the equations
\begin{equation}
\label{eq:nn_form}
  \begin{split}
    \hat{z}_{i+1} & = W_i z_i + b_i, \;\;\for i=1,\ldots,k-1 \\
    z_i & = \max\{\hat{z}_i, 0\}, \;\;\for i=2,\ldots,k-1
  \end{split}
\end{equation}
with $z_1 \equiv x$ and $f_\theta(x) \equiv \hat{z}_k$ (the logits input to the classifier).
We use $\theta = \{W_i, b_i\}_{i=1, \dots, k}$ to
denote the set of all parameters of the network, where $W_i$ represents a
linear operator such as matrix multiply or convolution.


We use the set $\mathcal{Z}_\epsilon(x)$ to denote the adversarial polytope, or 
the set of all final-layer activations attainable by perturbing $x$ by
 some $\Delta$ with $\ell_\infty$ norm bounded by $\epsilon$:\footnote{For the
   sake of concreteness, we will focus on the $\ell_\infty$ bound during this
   exposition, but the method does extend to other norm balls, which we will
   highlight shortly.}
\begin{equation}
  \mathcal{Z}_\epsilon(x) = \{f_\theta(x + \Delta) : \|\Delta\|_\infty \leq \epsilon\}.
\label{eq:adversarial_polytope}
\end{equation}
For multi-layer networks, $\mathcal{Z}_\epsilon(x)$ is a non-convex set (it can
be represented exactly via an integer program as in \citep{lomuscio2017approach}
or via SMT constraints \cite{katz2017reluplex}), so cannot easily be optimized over.

The foundation of our approach will be to construct a \emph{convex outer bound}
on this adversarial polytope, as illustrated in Figure \ref{fig-outer-bound}.
If no point within this outer approximation exists that will change the
class prediction of an example, then we are also guaranteed that no point within
the true adversarial polytope can change its prediction either, i.e., the point
is robust to adversarial attacks.  Our eventual approach will be to train a
network to optimize the \emph{worst} case loss over this convex outer bound,
effectively applying robust optimization techniques despite non-linearity of
the classifier.

\begin{figure}[t]
\vskip 0.05in
\begin{center}
  \includegraphics[width=3in]{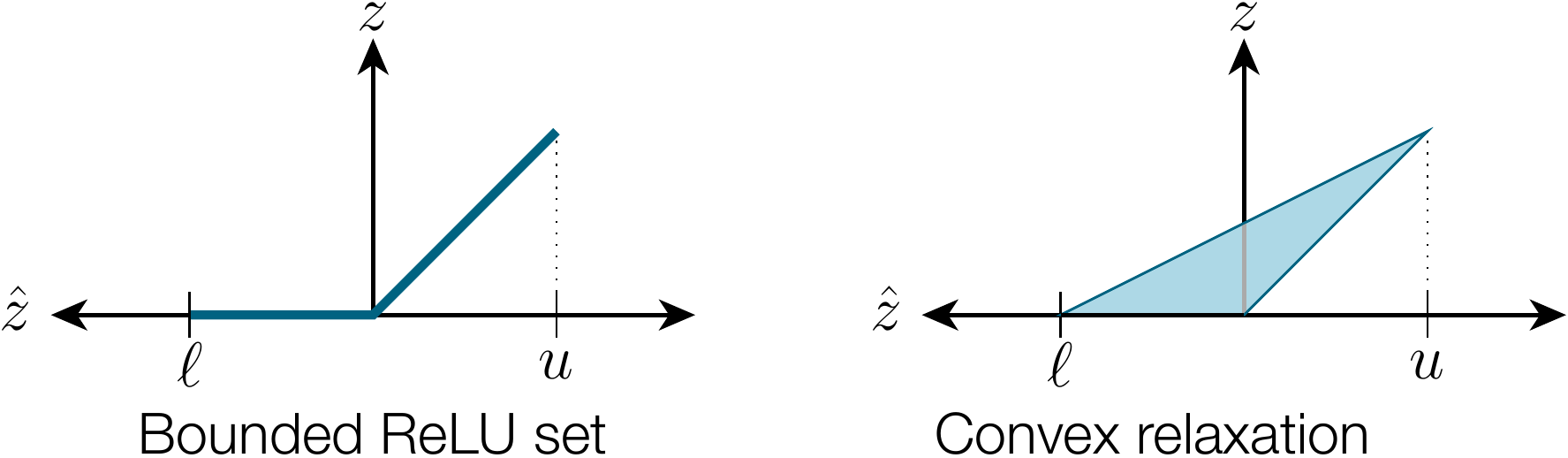}
  \vspace{-0.1in}
\caption{Illustration of the convex ReLU relaxation over the bounded set $[\ell,u]$.}
\label{fig-relu-relaxation}
\end{center}
 \vskip -0.2in
\end{figure}

The starting point of our convex outer bound is a linear relaxation of the ReLU
activations.  Specifically, given known lower and upper bounds $\ell$, $u$ for
the pre-ReLU activations, we can replace the ReLU equalities $z = \max\{0,\hat{z}\}$
from \eqref{eq:nn_form} with their upper convex envelopes,
\begin{equation}
  z \geq 0, \; z \geq \hat{z}, \; -u \hat{z} + (u-\ell) z \leq -u \ell.
\end{equation}
The procedure is illustrated in Figure \ref{fig-relu-relaxation}, and we note
that if $\ell$ and $u$ are both positive or both negative, the relaxation is
exact.  The same 
relaxation at the activation level was used in \citet{ehlers2017formal}, 
however as a sub-step for exact (combinatorial) verification of
networks, and the method for actually computing the crucial bounds
$\ell$ and $u$ is different.  We denote this outer bound on the
adversarial polytope from replacing the ReLU constraints as $\tilde{\mathcal{Z}}_\epsilon(x)$.

\paragraph{Robustness guarantees via the convex outer adversarial polytope.}  We
can use this outer bound to provide provable guarantees on the 
adversarial robustness of a classifier.  Given a sample $x$ with known label
$y^\star$, we can find the point in $\tilde{\mathcal{Z}}_\epsilon(x)$ that
minimizes this class and maximizes some
alternative target $y^{\mathrm{targ}}$, by solving the optimization problem
\begin{equation}
  \label{eq-primal-lp}
  \begin{split}
    \minimize_{\hat{z}_k} \;\; & (\hat{z}_k)_{y^\star} - (\hat{z_k})_{y^{\mathrm{targ}}} \equiv c^T \hat{z}_k\\
    \subjectto \;\; &\hat{z}_k \in \tilde{\mathcal{Z}}_\epsilon(x)
    \end{split}
\end{equation}
where $c \equiv e_{y^\star} - e_{y^{\mathrm{targ}}}$.  Importantly, this is a
\emph{linear program} (LP): the objective is linear in the 
decision variables, and our convex outer approximation consists of just linear
equalities and inequalities.\footnote{The full explicit form of this LP is given
in Appendix \ref{app:lp}.}  \emph{If we solve this LP for all target classes
$y^{\mathrm{targ}} \neq y^\star$ and find that the objective value in all cases
is \emph{positive} (i.e., we cannot make the true class activation lower than the
target even in the outer polytope), then we know that \emph{no} norm-bounded adversarial perturbation of
the input could misclassify the example.}

We can conduct similar analysis on test examples as well.  If the
network predicts some class $\hat{y}$ on an example $x$, then we can use the
same procedure as above to test whether the network will output any
\emph{different} class for a norm-bounded perturbation.  If not, then the
example \emph{cannot} be adversarial, because no input within the norm ball
takes on a different class (although of course, the network could still be
predicting the wrong class).  Although this procedure may incorrectly
``flag'' some non-adversarial examples, it will have zero false negatives, e.g.,
there may be a normal example that can still be classified differently due to a
norm-bounded perturbation, but all norm-bounded adversarial examples will be
detected.  

Of course, two major issues remain: 1) although the LP formulation can be solved
``efficiently'', actually solving an LP via traditional methods for each
example, for each target class, is not tractable; 2) we need a way of computing
the crucial $\ell$ and $u$ bounds for the linear relaxation.  We address these
in the following two sections.

\subsection{Efficient Optimization via the Dual Network}

Because solving an LP with a number of variables equal to the number of
activations in the deep network via standard approaches is not practically
feasible, the key aspect of our approach lies in our method for very efficiently
bounding these solutions.  Specifically, we consider the \emph{dual 
  problem} of the LP above; recall that any feasible dual solution provides a
guaranteed lower bound on the solution of the primal.  \emph{Crucially, we show that
the feasible set of the dual problem can itself be expressed as a deep network,
and one that is very similar to the standard backprop network}.  This means that
providing a provable lower bound on the primal LP (and hence also a provable
bound on the adversarial error), can be done with \emph{only a single backward
  pass through a slightly modified network} (assuming for the time being, that
we still have known upper and lower bounds for each activation).  This is
expressed in the following theorem
\begin{theorem}
\label{thm:dual}
The dual of \eqref{eq-primal-lp} is of the form 
\begin{equation}
  \begin{split}
    \maximize_{\alpha} \;\; & J_\epsilon(x,g_\theta(c, \alpha))  \\
    \subjectto \;\; & \alpha_{i,j} \in [0,1], \; \forall i,j
    \end{split}
\end{equation}
where $J_\epsilon(x,\nu)$ is equal to 
\begin{equation}
    -\sum_{i=1}^{k-1} \nu_{i+1}^T b_i - x^T\hat{\nu}_1 - \epsilon \|\hat{\nu}_1\|_1 + 
    \sum_{i=2}^{k-1}\sum_{j \in \mathcal{I}_i} \ell_{i,j} [\nu_{i,j}]_+
    \label{eq:J}
\end{equation}
and $g_\theta(c,\alpha)$ is a $k$ layer feedforward neural network given by the equations
\begin{equation}
\label{eq:dual_network}
  \begin{split}
    \nu_k & = -c \\
    \hat{\nu}_i & = W_i^T \nu_{i+1}, \;\; \for i=k-1,\ldots,1 \\
    \nu_{i,j} & = \left \{
      \begin{array}{ll}
        0 & j \in \mathcal{I}^{-}_i \\
        \hat{\nu}_{i,j} & j \in \mathcal{I}^{+}_i \\
        \frac{u_{i,j}}{u_{i,j} - \ell_{i,j}} [\hat{\nu}_{i,j}]_+ - \alpha_{i,j} [\hat{\nu}_{i,j}]_- & j \in \mathcal{I}_i, \\
      \end{array} \right .\\
    &\kern 1.5in \for i= k-1, \dots, 2 \\
  \end{split}
\end{equation}
where $\nu$ is shorthand for $(\nu_i,
\hat\nu_i)$ for all $i$ (needed because the objective $J$ depends on \emph{all}
$\nu$ terms, not just the first), and where $\mathcal{I}_i^{-}$,
$\mathcal{I}_i^{+}$, and $\mathcal{I}_i$ denote the 
sets of activations in layer $i$ where the lower and upper bounds are both
negative, both positive, or span zero respectively. 
\end{theorem}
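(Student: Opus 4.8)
The plan is to derive the dual by explicitly writing out the primal LP \eqref{eq-primal-lp} in terms of the constraints defining $\tilde{\mathcal{Z}}_\epsilon(x)$, introducing a dual variable for each constraint, forming the Lagrangian, and then collapsing the inner minimization over the primal variables $(z_i, \hat{z}_i)$ to obtain the dual objective. First I would make the primal fully explicit: the decision variables are $z_1 = x + \Delta$ with $\|\Delta\|_\infty \le \epsilon$, the affine relations $\hat{z}_{i+1} = W_i z_i + b_i$ for $i = 1, \ldots, k-1$, and, for each $i = 2, \ldots, k-1$, the ReLU relaxation: on $\mathcal{I}_i^+$ the equality $z_{i,j} = \hat{z}_{i,j}$, on $\mathcal{I}_i^-$ the equality $z_{i,j} = 0$, and on $\mathcal{I}_i$ the three inequalities $z_{i,j} \ge 0$, $z_{i,j} \ge \hat{z}_{i,j}$, and $(u_{i,j} - \ell_{i,j}) z_{i,j} - u_{i,j}\hat{z}_{i,j} \le -u_{i,j}\ell_{i,j}$. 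I would assign dual variables $\nu_{i+1}$ to the affine equalities, $\mu_{i,j}, \tau_{i,j} \ge 0$ to the two lower-bound inequalities, and $\lambda_{i,j} \ge 0$ to the upper-envelope inequality on $\mathcal{I}_i$, plus a dual norm term for the $\ell_\infty$ constraint on $\Delta$.

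The core computation is forming the Lagrangian $L$ and minimizing over the primal variables. Minimizing over the unconstrained $\hat{z}_{i,j}$ (after substituting out the $z$'s where there are equalities, or treating $z$ and $\hat{z}$ jointly) forces a stationarity condition that I expect to yield exactly the recursion \eqref{eq:dual_network}: $\hat{\nu}_i = W_i^T \nu_{i+1}$ comes from the coefficient of $z_i$ in the affine constraints, and the piecewise definition of $\nu_{i,j}$ in terms of $\hat{\nu}_{i,j}$ comes from eliminating $\mu, \tau, \lambda$ via their own stationarity/sign conditions. Specifically, on $\mathcal{I}_i^-$ the variable $z_{i,j}$ is pinned to $0$ so it contributes nothing, giving $\nu_{i,j}=0$; on $\mathcal{I}_i^+$ we get $\nu_{i,j} = \hat{\nu}_{i,j}$; and on $\mathcal{I}_i$ the three multipliers combine so that when $\hat{\nu}_{i,j} \ge 0$ the active constraint is the upper envelope and the surviving coefficient is $\frac{u_{i,j}}{u_{i,j}-\ell_{i,j}}[\hat{\nu}_{i,j}]_+$, while when $\hat{\nu}_{i,j} < 0$ the $z \ge \hat{z}$ constraint is what binds and we are left with a free multiplier $\alpha_{i,j} \in [0,1]$ scaling $[\hat{\nu}_{i,j}]_-$. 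Identifying $\alpha_{i,j}$ with the appropriately normalized dual variable of the $z \ge \hat z$ constraint, and checking that the box constraint $[0,1]$ is exactly the feasibility region that remains after eliminating everything else, is what produces the stated form. The leftover terms that do not cancel — the constant terms $-\nu_{i+1}^T b_i$ from the affine constraints, the term $-x^T \hat{\nu}_1 - \epsilon\|\hat{\nu}_1\|_1$ from minimizing $\hat{\nu}_1^T(x+\Delta)$ over $\|\Delta\|_\infty \le \epsilon$ (using that the support function of the $\ell_\infty$ ball is the $\ell_1$ norm), and the term $\ell_{i,j}[\nu_{i,j}]_+$ from the right-hand side $-u_{i,j}\ell_{i,j}$ of the envelope constraint after substituting the value of $\lambda_{i,j}$ — assemble into $J_\epsilon$ as in \eqref{eq:J}, with the initialization $\nu_k = -c$ coming from matching the objective coefficient $c^T\hat{z}_k$.

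The main obstacle I anticipate is the bookkeeping on the set $\mathcal{I}_i$: there are three inequality constraints per such coordinate, each with its own nonnegative multiplier, and one must carefully take the minimization over the pair $(z_{i,j}, \hat{z}_{i,j})$ jointly, track which multiplier survives in each sign regime of $\hat{\nu}_{i,j}$, and verify that the others are forced to zero (or that their stationarity conditions are automatically satisfiable) so that the dual feasible set genuinely reduces to just $\alpha_{i,j} \in [0,1]$ with no residual constraints coupling layers. A secondary point requiring care is the handling of the first layer: $z_1 = x + \Delta$ is not a ReLU output, so the relevant step is the exact minimization $\min_{\|\Delta\|_\infty\le\epsilon} \hat{\nu}_1^T(x+\Delta) = x^T\hat{\nu}_1 - \epsilon\|\hat{\nu}_1\|_1$, and it is here (and only here) that the choice of norm enters, which explains the remark that other norm balls can be substituted by replacing $\|\cdot\|_1$ with the appropriate dual norm. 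Once these two points are handled, weak LP duality immediately gives that $J_\epsilon(x, g_\theta(c,\alpha))$ is a lower bound on \eqref{eq-primal-lp} for every feasible $\alpha$, and maximizing over $\alpha$ gives the tightest such bound, which is the claimed dual.
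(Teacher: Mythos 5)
Your proposal is correct and follows essentially the same route as the paper's proof in Appendix A.2: dualize the explicit LP with multipliers $\nu$, $\mu$, $\tau$, $\lambda$, eliminate the auxiliary multipliers by the sign/complementarity reasoning on $\hat{\nu}_{i,j}$ (yielding the leaky-ReLU recursion with the free split parameter $\alpha_{i,j}\in[0,1]$), and collect the leftover terms $-\nu_{i+1}^Tb_i$, $-x^T\hat{\nu}_1-\epsilon\|\hat{\nu}_1\|_1$, and $\ell_{i,j}[\nu_{i,j}]_+$ into $J_\epsilon$. The only cosmetic difference is that you handle the input constraint directly via the support function of the $\ell_\infty$ ball, whereas the paper introduces explicit multipliers $\xi^{\pm}$ for the two-sided box constraints and then recognizes them as the positive and negative parts of $W_1^T\nu_2$ — the two are equivalent.
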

The ``dual network'' from \eqref{eq:dual_network} in fact is almost 
identical to the backpropagation network, except that for nodes $j$ in
$\mathcal{I}_i$ there is the additional free variable $\alpha_{i,j}$ that we can
optimize over to improve the objective.  In practice, rather than optimizing
explicitly over $\alpha$, we choose the fixed, dual feasible solution
\begin{equation}
  \alpha_{i,j} = \frac{u_{i,j}}{u_{i,j} - \ell_{i,j}}.
\end{equation}
This makes the entire backward pass a \emph{linear} function, and is additionally
justified by considerations regarding the conjugate set of the ReLU relaxation
(see Appendix \ref{app:alpha} for discussion).  Because \emph{any} solution
$\alpha$ is still dual feasible, this still provides a lower bound on the primal
objective, and one that is reasonably tight in practice.\footnote{The tightness of the bound is examined in Appendix \ref{app:experiments}.}  Thus, in the remainder
of this work we simply refer to the dual objective as $J(x, g_\theta(c))$, implicitly
using the above-defined $\alpha$ terms.

We also note that norm bounds other than the $\ell_\infty$ norm are also
possible in this framework: if the input perturbation is bounded within some
convex $\ell_p$ norm, then the only difference in the dual formulation is that
the $\ell_1$ norm on $\|\hat{\nu}\|_1$ changes to $\|\hat{\nu}\|_q$ where $q$ is
the dual norm of $p$.  However, because we focus solely on experiments with the
$\ell_\infty$ norm below, we don't emphasize this point in the current paper.

\subsection{Computing Activation Bounds}

\label{sec:compute_bounds}
\begin{algorithm}[tb]
   \caption{Computing Activation Bounds}
   \label{alg:bounds}
\begin{algorithmic}
  \STATE \textbf{input:} Network parameters $\{W_i,b_i\}_{i=1}^{k-1}$, data point
  $x$, ball size $\epsilon$
  \STATE \emph{// initialization}
  \STATE $\hat{\nu}_1 := W_1^T$
  \STATE $\gamma_1 := b_1^T$
  \STATE $\ell_2 := x^T W_1^T + b_1^T - \epsilon\|W_1^T\|_{1,:} $ 
  \STATE $ u_2 := x^T W_1^T + b_1^T + \epsilon\|W_1^T\|_{1,:} $
  \STATE \emph{//
    $\|\cdot\|_{1,:}$ for a matrix here denotes $\ell_1$ norm of all columns}
  \FOR{$i=2,\ldots,k-1$}
  \STATE form $\mathcal{I}_i^{-}$, $\mathcal{I}_i^+$, $\mathcal{I}_i$; form $D_i$ as
  in \eqref{eq:d_after_alpha}
  \STATE \emph{// initialize new terms}
  \STATE $\nu_{i,\mathcal{I}_i} := (D_i)_{\mathcal{I}_i} W_i^T $
  \STATE $\gamma_i := b_i^T$
  \STATE \emph{// propagate existing terms}
  \STATE $\nu_{j,\mathcal{I}_j} := \nu_{j,\mathcal{I}_j} D_i W_i^T, \;\;
  j=2,\ldots,i-1 $
  \STATE $\gamma_j :=  \gamma_j D_i W_i^T, \;\; j=1,\ldots,i-1 $
  \STATE $\hat{\nu}_1 := \hat{\nu}_1 D_i W_i^T$
  \STATE \emph{// compute bounds}
  \STATE $\psi_i := x^T \hat{\nu}_1 + \sum_{j=1}^i \gamma_j$
  \STATE $\ell_{i+1} := \psi_i -
  \epsilon\|\hat{\nu}_1\|_{1,:}  + \sum_{j=2}^{i}  \sum_{i' \in \mathcal I_i} \ell_{j,i'}  [-\nu_{j,i'}]_+$
  \STATE $u_{i+1} := \psi_i + \epsilon\|\hat{\nu}_1\|_{1,:}  - \sum_{j=2}^{i} \sum_{i' \in \mathcal I_i} \ell_{j,i'}  [\nu_{j,i'}]_+$
  \ENDFOR
  \STATE \textbf{output:} bounds $\{\ell_i,u_i\}_{i=2}^{k}$
\end{algorithmic}
\end{algorithm}

Thus far, we have ignored the (critical) issue of how we actually obtain the
elementwise lower and upper bounds on the pre-ReLU activations, $\ell$ and $u$.
Intuitively, if these bounds are too loose, then the adversary has too much
``freedom'' in crafting adversarial activations in the later layers that don't
correspond to any actual input.  However, because the dual function
$J_\epsilon(x,g_\theta(c))$ provides a bound on \emph{any} linear function $c^T
\hat{z}_k$ of the 
final-layer coefficients, we can compute $J$ for $c=I$ and $c=-I$ 
to obtain lower and upper bounds on these coefficients.  For $c = I$,
the backward pass variables (where $\hat{\nu}_i$ is now a matrix) are given by
\begin{equation}
  \label{eq:nu-approx}
  \begin{split}
  \hat{\nu}_i & = -W_i^T D_{i+1} W_{i+1}^T \ldots D_{n} W_n^T \\
  \nu_i & = D_i \hat{\nu}_i
  \end{split}
\end{equation}
where $D_i$ is a diagonal matrix with entries 
\begin{equation}
\label{eq:d_after_alpha}
  \begin{split}
	(D_i)_{jj} & = \left \{
      \begin{array}{ll}
        0 & j \in \mathcal{I}^{-}_i \\
        1 & j \in \mathcal{I}^{+}_i \\
        \frac{u_{i,j}}{u_{i,j} - \ell_{i,j}} & j \in \mathcal{I}_i \\
      \end{array} \right .
   \end{split} .
 \end{equation}

We can compute $(\nu_i, \hat\nu_i)$ and the corresponding upper bound $J_\epsilon(x, \nu)$ (which is now a vector) in
a layer-by-layer fashion, first generating bounds on $\hat{z}_2$, then using
these to generate bounds on $\hat{z}_3$, etc.  

The resulting algorithm, which uses these backward pass variables in matrix form
to incrementally build the bounds, is described in Algorithm
\ref{alg:bounds}.  From here on, the computation of $J$ will 
implicitly assume that we also compute the bounds. 
 Because the full algorithm is somewhat involved, we
highlight that there are two dominating costs to the full bound computation: 1)
computing a forward pass through the network on an ``identity matrix'' (i.e.,
a basis vector $e_i$ for each dimension $i$ of the input); and 2) computing a
forward pass starting at an intermediate layer, once for each activation in the set
$\mathcal{I}_i$ (i.e., for each activation where the upper and lower bounds span
zero).  Direct computation of the bounds requires computing these forward passes
explicitly, since they ultimately factor into the nonlinear terms in the $J$
objective, and this is admittedly the poorest-scaling aspect of our approach.  A
number of approaches to scale this to larger-sized inputs is possible,
including bottleneck layers earlier in the network, e.g. PCA processing of the
images, random projections, or other similar constructs; at the current point,
however, this remains as future work.  Even without improving scalability, the
technique already can be applied to much larger networks than any alternative
method to prove robustness in deep networks that we are aware of.

\subsection{Efficient Robust Optimization}
\label{sec:robust}
Using the lower bounds developed in the previous sections, we can develop an
efficient optimization approach to training provably robust deep networks.
Given a data set $(x_i, y_i)_{i=1,\dots,N}$, instead of minimizing the loss at these data
points, we minimize (our bound on) the \emph{worst} location (i.e. with the
highest loss) in an $\epsilon$ ball around each $x_i$, i.e.,
\begin{equation}
  \label{eq:robust-opt}
  \minimize_\theta \;\; \sum_{i=1}^N \max_{\|\Delta\|_\infty \leq \epsilon} L(f_\theta(x_i + \Delta), y_i).
\end{equation}
This is a standard robust optimization objective, but prior to this work it was
not known how to train these classifiers when $f$ is a deep nonlinear network.

We also require that a multi-class loss function have the following property (all of
cross-entropy, hinge loss, and zero-one loss have this property):
\begin{property}
\label{prop:invariance}
A multi-class loss function $L : \mathbb{R}^{|y|} \times \mathbb{R}^{|y|} \rightarrow
\mathbb{R}$ is translationally invariant if for all $a\in\mathbb{R}$, 
\begin{equation}
\label{eq:invariance}
L(y,y^\star) = L(y-a1,y^\star).
\end{equation}
\end{property}
Under this assumption, we can upper bound the robust optimization problem using
our dual problem in Theorem \ref{thm:robust_opt}, which we prove in Appendix
\ref{app:upper}.  

\begin{theorem}
Let $L$ be a monotonic loss function that satisfies Property \ref{prop:invariance}. 
For any data point $(x,y)$, and $\epsilon>0$, the worst case adversarial loss
from  \eqref{eq:robust-opt} can be upper bounded by
\begin{equation}
\max_{\|\Delta\|_\infty \leq \epsilon} L(f_\theta(x + \Delta), y) \leq
L(-J_{\epsilon}(x,g_\theta(e_{y} 1^T - I)), y),
\end{equation}
where $J_{\epsilon}$ is vector valued and 
as defined in \eqref{eq:J} for a given $\epsilon$, and
$g_\theta$ is as defined in \eqref{eq:dual_network} for the given model
parameters $\theta$.  
\label{thm:robust_opt}
\end{theorem}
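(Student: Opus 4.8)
The plan is to turn the worst-case loss into a purely componentwise comparison of ``margin vectors'' and then invoke monotonicity. First I would invoke Property~\ref{prop:invariance}: for any perturbation with $\|\Delta\|_\infty\le\epsilon$, writing $z=f_\theta(x+\Delta)$ and choosing $a=(z)_y$, translational invariance gives $L(z,y)=L(z-(z)_y 1,y)$, so the loss depends only on the margin vector $m$ with entries $m_j=(z)_j-(z)_y$ (note $m_y=0$). Observe that the $j$-th column of the matrix $e_y 1^T - I$ is $c_j\equiv e_y-e_j$, and $c_j^T z=(z)_y-(z)_j=-m_j$, so $m$ is exactly $-(e_y 1^T-I)^T z$ read columnwise; this is the bookkeeping that connects the statement's matrix argument to the per-target LP of \eqref{eq-primal-lp}.

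Next I would apply Theorem~\ref{thm:dual} together with weak LP duality, columnwise. For each target column $c_j=e_y-e_j$, the dual objective value $J_\epsilon(x,g_\theta(c_j))$ from \eqref{eq:J}--\eqref{eq:dual_network} is a lower bound on the optimal value of the primal LP \eqref{eq-primal-lp} with cost $c_j$, i.e.\ on $\min_{\hat z_k\in\tilde{\mathcal{Z}}_\epsilon(x)} c_j^T\hat z_k$. Since $\mathcal{Z}_\epsilon(x)\subseteq\tilde{\mathcal{Z}}_\epsilon(x)$, in particular $f_\theta(x+\Delta)\in\tilde{\mathcal{Z}}_\epsilon(x)$, so $c_j^T f_\theta(x+\Delta)\ge J_\epsilon(x,g_\theta(c_j))$, which rearranges to $m_j\le -J_\epsilon(x,g_\theta(c_j))$. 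Reading this over all columns gives the componentwise bound $m\le -J_\epsilon(x,g_\theta(e_y 1^T-I))$, where $J_\epsilon(x,g_\theta(\cdot))$ applied to a matrix is understood columnwise exactly as in Algorithm~\ref{alg:bounds} and \eqref{eq:nu-approx}. I would separately check the degenerate column $j=y$: there $c_y=e_y-e_y=0$, so $g_\theta(0)$ yields $\nu\equiv 0$ and $J_\epsilon(x,0)=0=m_y$, hence the bound holds with equality in that coordinate and no degeneracy arises.

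Finally, monotonicity of $L$ closes the argument: since $L(\cdot,y)$ is non-decreasing in each coordinate — equivalently, non-decreasing in each margin $(z)_j-(z)_y$, which is precisely the property shared by cross-entropy, hinge, and zero-one losses — the componentwise inequality $m\le -J_\epsilon(x,g_\theta(e_y 1^T-I))$ yields $L(m,y)\le L(-J_\epsilon(x,g_\theta(e_y 1^T-I)),y)$. Chaining with the first step, $L(f_\theta(x+\Delta),y)=L(m,y)\le L(-J_\epsilon(x,g_\theta(e_y 1^T-I)),y)$ for every feasible $\Delta$, and taking the maximum over $\Delta$ preserves the inequality because the right-hand side is independent of $\Delta$. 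The main obstacle I anticipate is purely one of sign/direction discipline: the dual gives a \emph{lower} bound on a \emph{minimization} whose cost is $(z)_y-(z)_j$, so a single negation converts it into an \emph{upper} bound on the margin $(z)_j-(z)_y$, and monotonicity must then be applied to that upper bound — a sign slip anywhere reverses the theorem. Beyond that, the only care needed is confirming the columnwise interpretation of $g_\theta$ and $J_\epsilon$ on the matrix argument and that the zero column contributes a harmless $0$; everything else is routine.
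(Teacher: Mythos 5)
Your proposal is correct and follows essentially the same route as the paper's proof: use translational invariance to reduce the loss to the margin vector $(I-e_y1^T)\hat z_k$, apply the dual bound of Theorem~\ref{thm:dual} column by column with $c_j=e_y-e_j$ (including the harmless zero column for $j=y$), and finish with monotonicity of $L$. The only cosmetic difference is that you bound the loss pointwise for each feasible $\Delta$ before taking the maximum, whereas the paper first passes to the elementwise maximum over the (relaxed) adversarial polytope; the substance of the argument is identical.
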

We denote the upper bound from Theorem \ref{thm:robust_opt} as the robust loss.
Replacing the summand of \eqref{eq:robust-opt} with the robust loss results 
in the following minimization problem
\begin{equation}
\minimize_{\theta} \;\; \sum_{i=1}^N L(-J_{\epsilon}(x_i, g_\theta(e_{y_i} 1^T - I)), y_i).
\end{equation}
All the network terms,
including the upper and lower bound computation, are differentiable, so the
whole optimization can be solved with any standard stochastic gradient variant and autodiff toolkit,
and the result is a network that (if we achieve low loss) is guaranteed to be robust
to adversarial examples.

\subsection{Adversarial Guarantees}

Although we previously described, informally, the guarantees provided by our
bound, we now state them formally.
The bound for the robust optimization procedure gives rise to several
\emph{provable} metrics measuring robustness  and detection of adversarial
attacks, which can be computed for any ReLU based neural network  
independently from how the network was trained; however, not surprisingly, the
bounds are by far the tightest and the most useful in cases where the network was
trained explicitly to minimize a robust loss.

\paragraph{Robust error bounds}
The upper bound from Theorem \ref{thm:robust_opt} functions as 
a certificate that guarantees robustness around an example (if 
classified correctly), as described in Corollary \ref{thm:guarantee}. The proof
is immediate, but included in Appendix \ref{app:guarantee}.

\begin{corollary}
\label{thm:guarantee}
For a data point $x$, label $y^\star$ and $\epsilon>0$, if 
\begin{equation}
J_{\epsilon}(x, g_\theta(e_{y^\star} 1^T - I)) \geq 0
\end{equation} 
(this quantity is a vector, so the inequality means that all elements must be
greater than zero) then the model is guaranteed to be robust around this data
point. Specifically, there does not exist an adversarial example $\tilde x$ 
such that $\|\tilde x - x\|_\infty \leq \epsilon$ and $f_\theta(\tilde x) \neq y^\star$. 
\end{corollary}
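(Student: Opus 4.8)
The plan is to chain three ingredients that are already in hand: the \emph{outer-bound} property $\mathcal{Z}_\epsilon(x)\subseteq\tilde{\mathcal{Z}}_\epsilon(x)$; weak LP duality applied to \eqref{eq-primal-lp}, which via Theorem~\ref{thm:dual} says that $J_\epsilon(x,g_\theta(c))$ is a valid lower bound on the primal optimum; and the bookkeeping observation that the matrix $e_{y^\star}1^T - I$ collects all the relevant target-class cost vectors as its columns.

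First I would read off $J_\epsilon(x,g_\theta(e_{y^\star}1^T - I))$ componentwise. The $j$-th column of $e_{y^\star}1^T - I$ is $e_{y^\star}-e_j$, and since $g_\theta$ and $J_\epsilon$ act columnwise on a matrix argument, the $j$-th coordinate of this vector is $J_\epsilon(x,g_\theta(e_{y^\star}-e_j))$; for $j=y^\star$ this is trivially $0$, and for $j\neq y^\star$ it is exactly the dual objective of the LP \eqref{eq-primal-lp} with cost $c = e_{y^\star}-e_j$. Next, fixing a target $j\neq y^\star$: the fixed $\alpha$ used inside $J_\epsilon$ satisfies $\alpha_{i,j}\in[0,1]$ and is therefore dual feasible, so by weak duality $J_\epsilon(x,g_\theta(e_{y^\star}-e_j))\le \min_{\hat z_k\in\tilde{\mathcal{Z}}_\epsilon(x)}\big((\hat z_k)_{y^\star}-(\hat z_k)_j\big)$. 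Now take an arbitrary $\tilde x$ with $\|\tilde x - x\|_\infty\le\epsilon$ and put $\hat z_k = f_\theta(\tilde x)$. A genuine forward pass satisfies each relaxed ReLU inequality (the exact ReLU point lies in the convex envelope over the sound box $[\ell,u]$), hence $f_\theta(\tilde x)\in\mathcal{Z}_\epsilon(x)\subseteq\tilde{\mathcal{Z}}_\epsilon(x)$, giving $(\hat z_k)_{y^\star}-(\hat z_k)_j \ge J_\epsilon(x,g_\theta(e_{y^\star}-e_j))\ge 0$ by hypothesis. Since this holds for every $j\neq y^\star$, the true class $y^\star$ attains the maximum entry of $f_\theta(\tilde x)$, so $f_\theta(\tilde x)=y^\star$; as $\tilde x$ ranged over the whole $\epsilon$-ball, no adversarial example exists, which is the claim.

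As the paper notes, the argument is essentially immediate once Theorem~\ref{thm:dual} is available; the one point that deserves care is the containment $\mathcal{Z}_\epsilon(x)\subseteq\tilde{\mathcal{Z}}_\epsilon(x)$, i.e.\ that the pre-ReLU bounds $[\ell,u]$ used to build $\tilde{\mathcal{Z}}_\epsilon(x)$ are sound over the entire $\epsilon$-ball. This follows because those bounds are themselves produced by evaluating $J$ with $c=\pm I$ layer by layer, so the same weak-duality reasoning certifies them recursively. I expect this soundness-of-the-intermediate-bounds step, rather than the final chaining of inequalities, to be the main thing to spell out.
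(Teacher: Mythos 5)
Your proposal is correct and follows essentially the same route as the paper's proof: $J_\epsilon$ is a lower bound on the true adversarial minimization (weak duality over the outer polytope $\tilde{\mathcal{Z}}_\epsilon(x)\supseteq\mathcal{Z}_\epsilon(x)$), read off columnwise for each target $c=e_{y^\star}-e_j$, so nonnegativity of every component forces $(\hat z_k)_{y^\star}\ge(\hat z_k)_j$ throughout the polytope. The extra care you devote to the soundness of the layerwise bounds $[\ell,u]$ is a reasonable elaboration of a step the paper leaves implicit, but it does not change the argument.
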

We denote the fraction of examples that do not have this certificate as the
robust error. Since adversaries can only hope to attack examples without this 
certificate, the robust error is a provable upper bound on the achievable 
error by \emph{any} adversarial attack.

\paragraph{Detecting adversarial examples at test time}
The certificate from Theorem \ref{thm:guarantee} can also be modified trivially
to detect adversarial examples at test time.
Specifically, we replace the bound based upon the true class $y^\star$ to a
bound based upon just the predicted class $\hat{y} = \max_y f_\theta(x)_y$.  In
this case we have the following simple corollary.

\begin{corollary}
\label{thm:detection}
For a data point $x$, model prediction $\hat{y} = \max_y f_\theta(x)_y$ and $\epsilon>0$, if 
\begin{equation}
J_{\epsilon}(x, g_\theta(e_{\hat{y}} 1^T - I)) \geq 0
\end{equation} 
then $x$ cannot be an adversarial example.  Specifically, $x$ cannot be a
perturbation of a ``true'' example $x^\star$ with $\|x-x^\star\|_\infty \leq
\epsilon$, such that the model would correctly classify $x^\star$, but
incorrectly classify $x$.
\end{corollary}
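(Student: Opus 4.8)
The plan is to reduce this to Corollary~\ref{thm:guarantee} by treating the predicted class $\hat y$ as if it were the ground-truth label, and then argue by contradiction. First I would note that the hypothesis $J_\epsilon(x, g_\theta(e_{\hat y}1^T - I)) \ge 0$ is literally the hypothesis of Corollary~\ref{thm:guarantee} with $y^\star$ replaced by $\hat y$; and since $\hat y = \arg\max_y f_\theta(x)_y$, the network does classify $x$ as $\hat y$, so the corollary applies verbatim. Its conclusion gives: there is no $\tilde x$ with $\|\tilde x - x\|_\infty \le \epsilon$ and $f_\theta(\tilde x) \neq \hat y$, i.e.\ the network predicts $\hat y$ at \emph{every} point of the $\epsilon$-ball around $x$ (this already incorporates the fact that $\tilde{\mathcal Z}_\epsilon(x)$ is an outer bound on the true adversarial polytope, so no separate argument about the relaxation is needed).

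Next I would unwind the definition of ``adversarial example'' in the detection setting. Suppose, for contradiction, that $x$ \emph{is} adversarial: there is a genuine example $x^\star$ with $\|x - x^\star\|_\infty \le \epsilon$ such that the network classifies $x^\star$ correctly --- so $f_\theta(x^\star)$ predicts the true label $y^\star$ of $x^\star$ --- but misclassifies $x$. Because $x$ is a perturbation of $x^\star$, its ground-truth label is that of $x^\star$, so ``misclassifies $x$'' means precisely $f_\theta(x) = \hat y \ne y^\star$.

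Now I would close the loop: since $\|\cdot\|_\infty$ is symmetric, $\|x^\star - x\|_\infty \le \epsilon$, so $x^\star$ lies in the $\epsilon$-ball around $x$, and applying the conclusion of the first step at the point $x^\star$ shows the network predicts $\hat y$ on $x^\star$. Combined with correct classification of $x^\star$ (prediction $= y^\star$), this forces $\hat y = y^\star$, contradicting $\hat y \ne y^\star$; hence no such $x^\star$ exists and $x$ cannot be an adversarial example. I expect the only real subtlety --- a matter of care rather than difficulty --- to be the bookkeeping around what ``classified correctly/incorrectly'' means when the $\arg\max$ is not unique, and making explicit that a perturbation $x$ of $x^\star$ inherits $x^\star$'s ground-truth label so that ``misclassifying $x$'' is exactly $f_\theta(x) \ne y^\star$. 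Once these conventions are pinned down, the argument is immediate and uses no estimates beyond Corollary~\ref{thm:guarantee}.
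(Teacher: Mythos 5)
Your proposal is correct and matches the paper's argument: the paper likewise notes that the certificate with the predicted class $\hat{y}$ guarantees every point within $\ell_\infty$ distance $\epsilon$ of $x$ receives the same prediction as $x$, so any putative correctly classified source $x^\star$ in that ball would be predicted $\hat{y}$, forcing $\hat{y}=y^\star$ and contradicting the misclassification of $x$. You simply spell out the bookkeeping (symmetry of the norm, inheriting the ground-truth label) that the paper leaves as ``immediate.''
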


This corollary follows immediately from the fact that the robust bound
guarantees no example with $\ell_\infty$ norm within $\epsilon$ of $x$
is classified differently from $x$.  This approach may 
classify non-adversarial inputs as 
potentially adversarial, but it has zero false negatives, in that it will never
fail to flag an adversarial example.  Given the challenge in even defining
adversarial examples in general, this seems to be as strong a guarantee as is
currently possible.

\paragraph{$\epsilon$-distances to decision boundary}
Finally, for each example $x$ on a fixed network, we can compute  the
largest value of $\epsilon$ for which a certificate of robustness 
exists, i.e., such that the output $f_\theta(x)$ provably cannot be flipped
within the $\epsilon$ ball.  Such an epsilon gives a lower bound on the
$\ell_\infty$ distance from the example to the decision boundary (note that the
classifier may or may not  actually be correct). Specifically, if we find
$\epsilon$ to solve the optimization problem
\begin{equation}
  \label{eq:eps_dist}
  \begin{split}
    \maximize_{\epsilon} \;\; & \epsilon\\
    \subjectto \;\; & J_{\epsilon}(x, g_\theta(e_{f_\theta(x)} 1^T - I))_y \geq 0,
  \end{split}
\end{equation}
then we know that $x$ must be at least $\epsilon$ away from the decision boundary in $\ell_\infty$ 
distance, and that this is the largest $\epsilon$ for which we have a certificate of robustness. 
The certificate is monotone in $\epsilon$, 
and the problem can be solved using Newton's method.

\section{Experiments}

Here we demonstrate the approach on small and medium-scale problems.  Although the method does not yet scale to ImageNet-sized classifiers,
we do demonstrate the approach on a simple convolutional network applied to
several image classification problems, illustrating that the method can apply to 
approaches beyond very small 
fully-connected networks (which represent the state of the
art for most existing work on neural network verification).  Scaling challenges
were discussed briefly above, and we highlight them more below.   Code for
these experiments 
is available at
\url{http://github.com/locuslab/convex_adversarial}. 

A summary of all the experiments is in Table \ref{tab:results}. For all
experiments, we report the clean test error, the error achieved by
 the fast gradient sign method \cite{goodfellow2015explaining}, the
error achieved by the projected gradient descent approach \cite{madry2017towards},
and the robust error bound. In all cases, the robust error
bound for the robust model is significantly lower than
the achievable error rates by PGD under standard training. All 
experiments were run on a single Titan X GPU. For more 
experimental details, see Appendix \ref{app:experiments}. 


\begin{table*}[t]
\caption{Error rates for various problems and attacks, and our robust bound for baseline and robust models.}
\label{tab:results}
\vskip 0.15in
\begin{center}
\begin{small}
\begin{sc}
\begin{tabular}{lccccccc}
\toprule
Problem & Robust & $\epsilon$ & Test error & FGSM error & PGD error & Robust error bound \\
\midrule
MNIST     & $\times$ & 0.1 & 1.07\% & 50.01\% & 81.68\% & 100\% \\
MNIST     & $\surd$ & 0.1 & 1.80\% & 3.93\% & 4.11\% & 5.82\% \\
\midrule
Fashion-MNIST     & $\times$ & 0.1 & 9.36\% & 77.98\% & 81.85\% & 100\% \\
Fashion-MNIST     & $\surd$ & 0.1 & 21.73\% & 31.25\% & 31.63\% & 34.53\% \\
\midrule
HAR & $\times$ & 0.05 & 4.95\% & 60.57\% & 63.82\% & 81.56\% \\
HAR & $\surd$  & 0.05 & 7.80\% & 21.49\% & 21.52\% & 21.90\% \\
\midrule
SVHN & $\times$ & 0.01 & 16.01\% & 62.21\% & 83.43\% & 100\% \\
SVHN & $\surd$ & 0.01 & 20.38\% & 33.28\% & 33.74\% & 40.67\% \\
\bottomrule
\end{tabular}
\end{sc}
\end{small}
\end{center}
\vskip -0.1in
\end{table*}

\subsection{2D Example} 
We consider training a robust binary classifier
on a 2D input space with randomly generated spread out data points. 
Specifically,
we use a 2-100-100-100-100-2 fully connected network.
Note that
there is no notion of generalization here; we are just visualizing and 
evaluating the ability of the learning approach to fit a classification 
function robustly.

\begin{figure}[t]
\begin{center}
  \centerline{\includegraphics[width=2.5in]{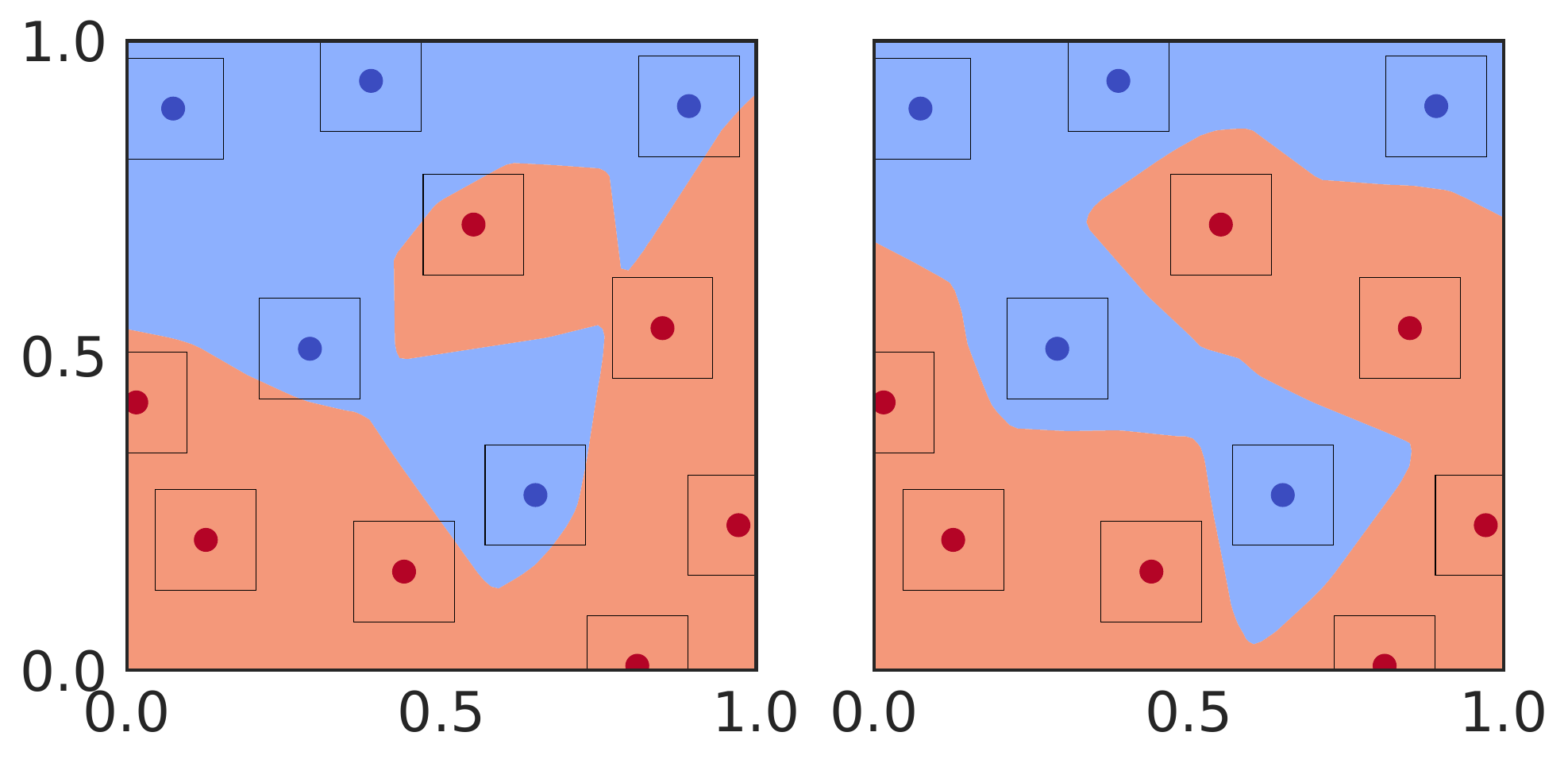}}
  \vspace{-0.1in}
  \caption{Illustration of classification boundaries resulting from standard
    training (left) and robust training (right) with $\ell_\infty$ balls of size
    $\epsilon=0.08$ (shown in figure). }
  \label{fig-robust}
\end{center}
\vskip -0.4in
\end{figure}

Figure \ref{fig-robust} shows the resulting classifiers produced by standard
training (left) and robust training via our method (right).  As expected, the 
standard training approach results in points
that are classified differently somewhere within their $\ell_\infty$ ball of
radius $\epsilon=0.08$ (this is exactly an adversarial example for the training
set). In contrast, 
the robust training method is able to attain zero robust error and 
provides a classifier that is
guaranteed to classify all points within the balls correctly. 



\subsection{MNIST}
We present results on a provably robust classifier on the
MNIST data set.  Specifically, we consider a ConvNet architecture 
that includes two convolutional layers, with 16 and 32
channels  (each with a stride of two, to decrease the resolution by half without
requiring max pooling layers), and two fully connected layers stepping down to
100 and then 10 (the output dimension) hidden units, with ReLUs following each
layer except the last.  

\begin{figure}[t]
\begin{center}
  \centerline{\includegraphics[width=\columnwidth]{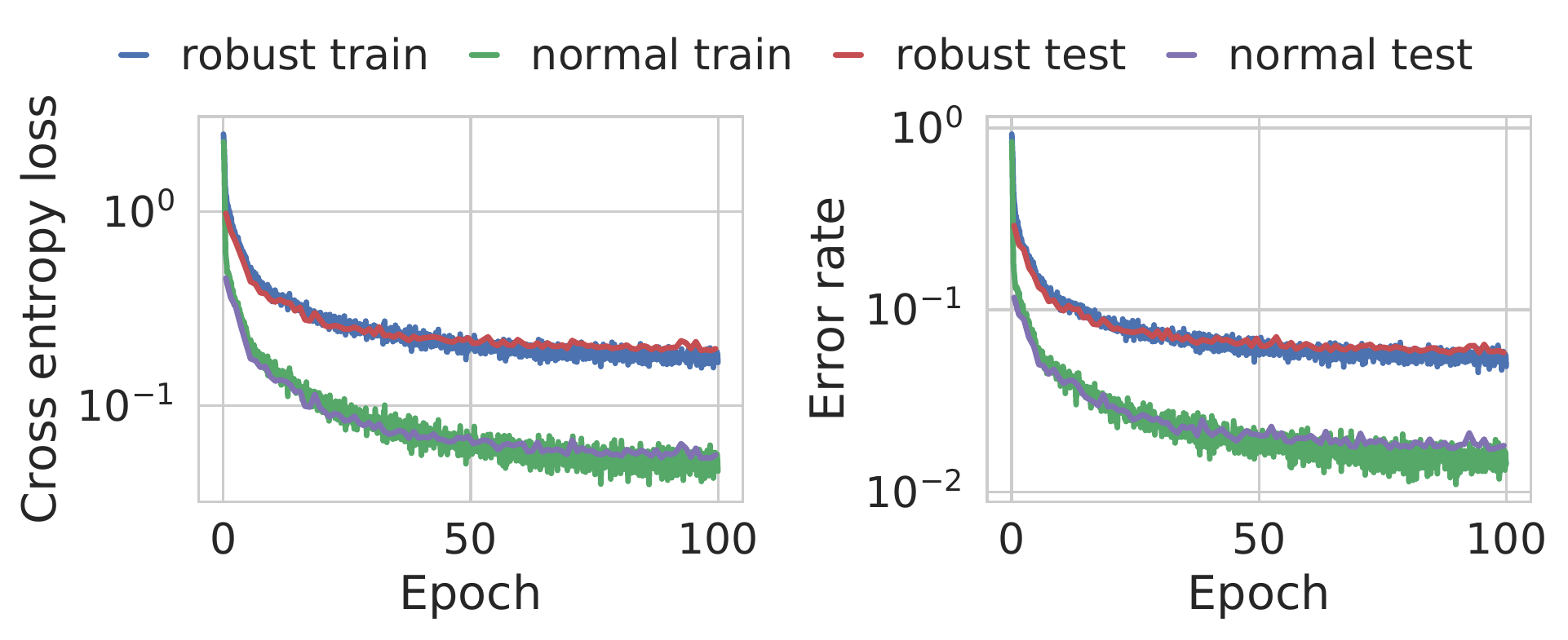}}
  \vspace{-0.1in}
\caption{Loss (left) and error rate (right) when training a robust convolutional
  network on the MNIST dataset. Similar learning curves for the other
  experiments can be found in Appendix \ref{app:experiments}.} 
\label{fig:mnist_curves}
\end{center}
\vskip -0.2in
\end{figure}

Figure \ref{fig:mnist_curves} shows the training progress using our procedure
with a robust softmax loss function and $\epsilon=0.1$.  As described in 
Section \ref{sec:robust}, any norm-bounded adversarial technique will be unable
to achieve loss or error higher than the robust bound.  
The final classifier after 100 epochs reaches a 
test error of 1.80\% with a robust test error of  5.82\%.  For a
traditionally-trained classifier (with 1.07\% test error) the FGSM approach
results in 50.01\% error, while PGD results in 81.68\% error.  On the classifier
trained with our method, however, FGSM
and PGD only achieve errors of 3.93\% and 4.11\% respectively (both, naturally,
below our bound of 5.82\%).  These results are summarized in Table
\ref{tab:results}.



\begin{figure}[t]
\begin{center}
  \centerline{\includegraphics[width=\columnwidth]{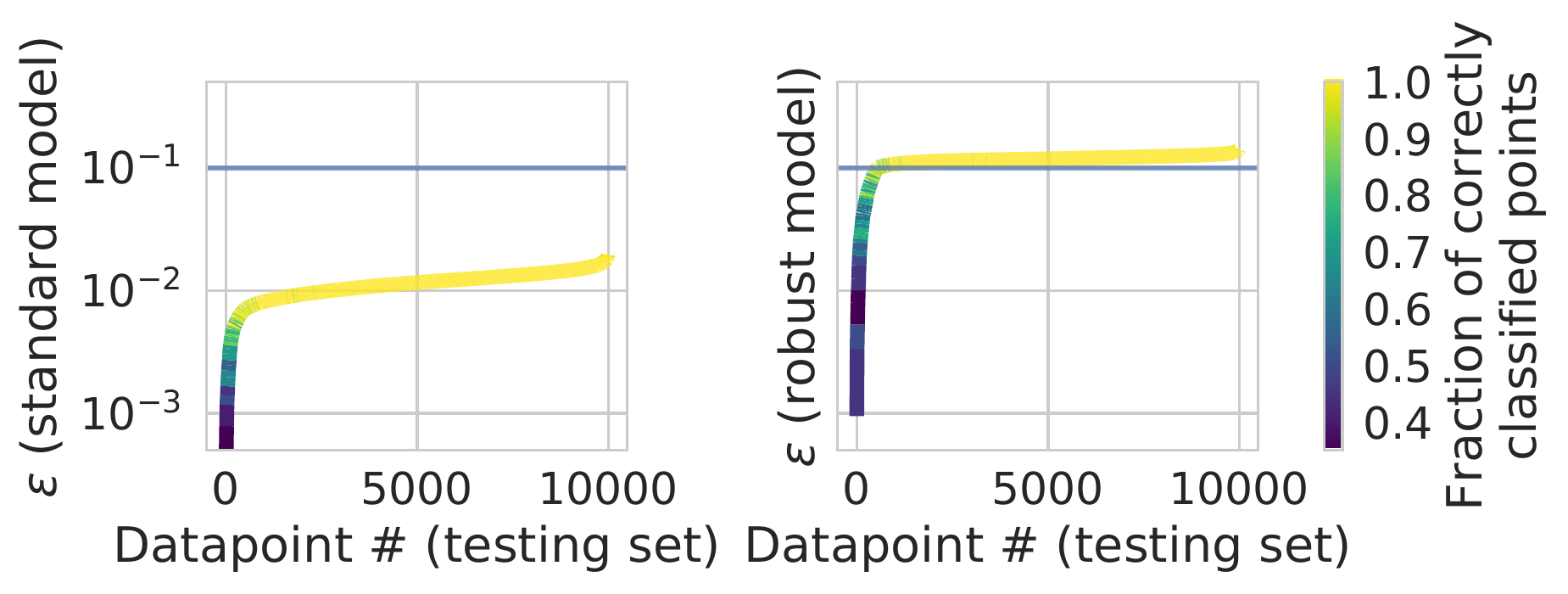}}
  \vspace{-0.1in}
\caption{Maximum $\epsilon$ distances to the decision boundary of each data point 
in increasing $\epsilon$ order for standard and robust models (trained 
with $\epsilon=0.1$). 
The color encodes the fraction of points which 
were correctly classified. }
\label{fig:mnist_epsilons}
\end{center}
\vspace{-0.3in}
\end{figure}

\paragraph{Maximum $\epsilon$-distances}
Using Newton's method with backtracking line search, for each example, 
we can compute in 5-6 Newton steps the maximum $\epsilon$ that is robust as
described in \eqref{eq:eps_dist} for both a standard classifier and the robust
classifier.  
Figure \ref{fig:mnist_epsilons} shows the maximum $\epsilon$ values calculated for each 
testing data point under standard training and robust training. Under standard training, 
the correctly classified examples have a lower bound of  around $0.007$ away from the decision boundary. 
However, with robust training this value is pushed to 0.1, which is expected since that is the 
robustness level used to train the model. We also observe that the incorrectly
classified examples all tend to be relatively closer to the decision boundary.

\subsection{Other Experiments}
\paragraph{Fashion-MNIST}
We present the results of our robust classifier on the Fashion-MNIST 
dataset \citep{xiao2017fashion}, 
a harder dataset with the same size (in dimension and number of examples) as
MNIST (for which input binarization is a reasonable defense). Using the 
same architecture as in MNIST, for $\epsilon=0.1$, we 
achieve a robust error of 34.53\%, which is fairly close to the PGD
error rate of 31.63\% (Table \ref{tab:results}).  Further
experimental details are in Appendix \ref{app:fashion}.

\paragraph{HAR}
We present results on a human activity recognition dataset \citep{anguita2013public}. 
Specifically, we consider a fully connected network with one layer of 500 hidden units
and $\epsilon=0.05$, 
achieving 21.90\% robust error. 


\paragraph{SVHN}
Finally, we present results on SVHN. The goal here is not to achieve state of the 
art performance on SVHN, but to create a deep convolutional classifier
for real world images with 
provable guarantees. Using the same architecture as in MNIST, for
$\epsilon=0.01$ we achieve a robust error bound of 42.09\%, with PGD 
achieving 34.52\% error. 
Further experimental details are in Appendix \ref{app:svhn}.

\subsection{Discussion}
Although these results are relatively small-scale, the somewhat surprising
ability here is that by just considering a few more forward/backward passes in a
modified network to compute an alternative loss, we can derive \emph{guaranteed}
error bounds for any adversarial attack.  
While this is by no means state of the art performance on standard benchmarks,
this is by far the largest provably verified network we are currently aware of,
and 5.8\% robust error on MNIST represents reasonable performance given that it
is against \emph{any} adversarial attack strategy bounded in $\ell_\infty$ norm, in comparison to the only
other robust bound of 35\% from \citet{raghunathan2018certified}. 

Scaling to ImageNet-sized classification problems remains a challenging task; 
the MNIST classifier takes about 5 hours to train for 100 epochs on a single Titan X GPU,
which is between two and three orders of magnitude more costly than naive training. 
But because the approach is not
combinatorially more expensive in its complexity, we believe it represents
a much more 
feasible approach than those based upon integer programming or satisfiability,
which seem highly unlikely to ever scale to such problems.  Thus, we believe the
current performance represents a substantial step forward in research on
adversarial examples.

\section{Conclusion}

In this paper, we have presented a method based upon
linear programming and duality theory for training classifiers that are provably
robust to norm-bounded adversarial attacks.   Crucially, instead of solving anything costly, 
we design an objective equivalent to a few passes through the
original network (with larger batch size), 
that is a guaranteed bound on the robust error and loss of the
classifier.  

While we feel this is a substantial step forward in 
defending classifiers, 
two main directions
for improvement exist, the first of which is scalability. 
Computing the bounds requires sending an identity matrix through the network, 
which amounts to a sample for every \emph{dimension} of the input vector 
(and more at
intermediate layers, for each activation with bounds that span
zero).  For domains like ImageNet, this is completely infeasible, and techniques
such as using bottleneck layers, other dual bounds, and
random projections are likely necessary.
However, unlike many past approaches, this scaling is not fundamentally
combinatorial, so has some chance of success even in large networks.

Second, it will be necessary to characterize attacks beyond simple
norm bounds. While $\ell_\infty$ bounded examples 
offer a 
compelling visualization of images that look ``identical'' to existing
examples, this is by no means the only set of possible
attacks.  For example, the work in \citet{sharif2016accessorize} was able
to break face  recognition software by using manufactured glasses, which is
clearly not bounded in $\ell_\infty$ norm, 
and the work in \citet{engstrom2017rotation} was able to fool convolutional networks with 
simple rotations and translations. 
Thus, a great deal of work remains to understand
both the space of adversarial examples that we \emph{want} classifiers to be
robust to, as well as methods for dealing with these likely highly non-convex
sets in the \emph{input} space.

Finally, although our focus in this paper was on adversarial examples and robust
classification, the general techniques described here (optimizing over
relaxed convex networks, and using a non-convex network representation of the
dual problem to derive guaranteed bounds), may find applicability well beyond
adversarial examples in deep learning.  Many problems that
invert neural networks or optimize over latent
spaces involve optimization problems that are a function of the neural network
inputs or activations, and similar techniques may be brought to bear in these
domains as well.

%

\section*{Acknowledgements}
This work was supported by a DARPA Young Faculty Award, under grant number N66001-17-1-4036. 
We thank Frank R. Schmidt for providing helpful comments on an earlier draft of this work. 
%
%

\bibliography{convex_adversarial}
\bibliographystyle{icml2018}

\clearpage
\appendix
\section{Adversarial Polytope}
\subsection{LP Formulation}
\label{app:lp}
Recall 
  \eqref{eq-primal-lp}, which uses a convex
outer bound of the adversarial polytope. 
\begin{equation}
  \minimize_{\hat{z}_k} c^T \hat{z_k}, \;\; \subjectto \hat{z}_k \in \tilde{\mathcal{Z}}_\epsilon(x)
\end{equation}
With the convex outer bound on the ReLU constraint 
and the adversarial perturbation on the input, this minimization problem is the 
following linear program 
\begin{equation}
\label{eq:lp}
  \begin{split}
  &\quad\minimize_{\hat{z}_k} c^T\hat{z_k}, \;\; \subjectto\\
    & \hat{z}_{i+1} = W_i z_i + b_i, \; i=1,\ldots,k-1 \\
  &z_1 \leq x + \epsilon \\
  &z_1 \geq x - \epsilon \\
  &z_{i,j} = 0, \; i=2,\ldots,k-1, j \in \mathcal{I}^{-}_i \\
  &z_{i,j}  = \hat{z}_{i,j}, \; i=2,\ldots,k-1, j \in \mathcal{I}^{+}_i \\
  & \!\! \left . \begin{aligned}
    & z_{i,j}  \geq 0, \\
    & z_{i,j} \geq \hat{z}_{i,j},  \\
    & \bigg((u_{i,j}-\ell_{i,j}) z_{i,j}\big. \\
    & \big.-u_{i,j} \hat{z}_{i,j}\bigg) \leq -u_{i,j} \ell_{i,j}
  \end{aligned} \; \right \} \; i=2,\ldots,k-1, j \in \mathcal{I}_i
  \end{split}
\end{equation}

\subsection{Proof of Theorem \ref{thm:dual}}
\label{app:dual}
In this section we derive the dual of the LP in \eqref{eq:lp}, 
in order to prove Theorem \ref{thm:dual}, reproduced below:
\begin{theorem*}
The dual of \eqref{eq-primal-lp} is of the form 
\begin{equation}
  \begin{split}
    \maximize_{\alpha} \;\; & J_\epsilon(x,g_\theta(c, \alpha))  \\
    \subjectto \;\; & \alpha_{i,j} \in [0,1], \; \forall i,j
    \end{split}
\end{equation}
where $J_\epsilon(x,\nu) = $
\begin{equation}
   -\sum_{i=1}^{k-1} \nu_{i+1}^T b_i - x^T\hat{\nu}_1 - \epsilon \|\hat{\nu}_1\|_1 + 
    \sum_{i=2}^{k-1}\sum_{j \in \mathcal{I}_i} \ell_{i,j} [\nu_{i,j}]_+
\end{equation}
and $g_\theta(c,\alpha)$ is a $k$ layer feedforward neural network given by the equations
\begin{equation}
  \begin{split}
    \nu_k & = -c \\
    \hat{\nu}_i & = W_i^T \nu_{i+1}, \;\; \for i=k-1,\ldots,1 \\
    \nu_{i,j} & = \left \{
      \begin{array}{ll}
        0 & j \in \mathcal{I}^{-}_i \\
        \hat{\nu}_{i,j} & j \in \mathcal{I}^{+}_i \\
        \frac{u_{i,j}}{u_{i,j} - \ell_{i,j}} [\hat{\nu}_{i,j}]_+ - \alpha_{i,j} [\hat{\nu}_{i,j}]_- & j \in \mathcal{I}_i, \\
      \end{array} \right .\\
    &\kern 1.5in \for i= k-1, \dots, 2 \\
  \end{split}
\end{equation}
where $\nu$ is shorthand for $(\nu_i,
\hat\nu_i)$ for all $i$ (needed because the objective $J$ depends on \emph{all}
$\nu$ terms, not just the first), and where $\mathcal{I}_i^{-}$,
$\mathcal{I}_i^{+}$, and $\mathcal{I}_i$ denote the 
sets of activations in layer $i$ where the lower and upper bounds are both
negative, both positive, or span zero respectively. 
\end{theorem*}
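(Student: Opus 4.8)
The plan is to write the LP \eqref{eq:lp} in a standard form, introduce a dual variable for every constraint, form the Lagrangian, and then explicitly eliminate the primal variables one layer at a time, showing that the resulting structure on the dual variables is exactly the ``dual network'' $g_\theta(c,\alpha)$ of \eqref{eq:dual_network}. First I would assign multipliers: call $\nu_{i+1}$ the (free) multiplier for each equality $\hat z_{i+1} = W_i z_i + b_i$, and introduce nonnegative multipliers for the input box constraints $z_1 \le x+\epsilon$, $z_1 \ge x - \epsilon$, and for the three ReLU-relaxation inequalities $z_{i,j}\ge 0$, $z_{i,j}\ge \hat z_{i,j}$, $(u_{i,j}-\ell_{i,j})z_{i,j} - u_{i,j}\hat z_{i,j} \le -u_{i,j}\ell_{i,j}$ on the ambiguous set $\mathcal I_i$; the equalities $z_{i,j}=0$ on $\mathcal I_i^-$ and $z_{i,j}=\hat z_{i,j}$ on $\mathcal I_i^+$ get their own free multipliers. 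Then I would form the Lagrangian, group terms by each primal variable $\hat z_k, \hat z_i, z_i$, and require the coefficient of each to vanish (since the primals are unconstrained once all inequalities are dualized), which yields the recursions.

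The key algebraic steps, in order: (i) the coefficient of $\hat z_k$ gives $\nu_k = -c$; (ii) the coefficient of $z_i$ for $i=2,\dots,k-1$ relates $\hat\nu_i := W_i^T\nu_{i+1}$ to $\nu_i$ through the ReLU multipliers, and (iii) the coefficient of $\hat z_i$ pins down the ReLU multipliers themselves. On $\mathcal I_i^-$ this forces $\nu_{i,j}=0$; on $\mathcal I_i^+$ it forces $\nu_{i,j}=\hat\nu_{i,j}$; on $\mathcal I_i$, after introducing the reparametrization $\alpha_{i,j}$ for the ratio between the lower-bound multipliers (this is where the free variable $\alpha_{i,j}\in[0,1]$ emerges), one obtains $\nu_{i,j} = \tfrac{u_{i,j}}{u_{i,j}-\ell_{i,j}}[\hat\nu_{i,j}]_+ - \alpha_{i,j}[\hat\nu_{i,j}]_-$. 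For $i=1$ there is no ReLU, just the box, and optimizing the $z_1$ terms over the box produces the $-x^T\hat\nu_1 - \epsilon\|\hat\nu_1\|_1$ contribution (here the dual of the $\ell_\infty$ box is the $\ell_1$ norm — the only place the choice of norm enters). Finally I would collect the constant terms: $-\sum_i \nu_{i+1}^T b_i$ from the affine layers, and $\sum_{i,j\in\mathcal I_i}\ell_{i,j}[\nu_{i,j}]_+$ from the slack $-u_{i,j}\ell_{i,j}$ in the third ReLU inequality, combined with the sign bookkeeping, to arrive at $J_\epsilon(x,\nu)$ as in \eqref{eq:J}. The nonnegativity constraints on the remaining multipliers should all turn out to be automatically satisfied by the $[\cdot]_+$/$[\cdot]_-$ splitting, leaving $\alpha\in[0,1]$ as the only surviving explicit constraint.

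The main obstacle I expect is the careful case analysis on the set $\mathcal I_i$: there are three inequality multipliers per ambiguous coordinate, the stationarity conditions in $z_{i,j}$ and $\hat z_{i,j}$ couple them, and one has to check that the only degree of freedom left after imposing stationarity is a single scalar $\alpha_{i,j}\in[0,1]$, and that this is consistent with all the sign constraints (e.g. that the ``$[\hat\nu_{i,j}]_+$'' versus ``$[\hat\nu_{i,j}]_-$'' branches correspond to which inequalities are tight). Handling the $[\cdot]_+$ notation cleanly — i.e. verifying that an unconstrained stationarity condition plus nonnegativity of multipliers forces exactly the stated $\max$/$\min$ form — is the delicate bookkeeping part; everything else (the affine layers, the input box, assembling $J_\epsilon$) is routine Lagrangian duality. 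I would present the derivation by introducing all multipliers at once in a single display, then devoting one short paragraph each to eliminating $\hat z_k$, the generic layers $z_i,\hat z_i$, and the input layer $z_1$, and a final paragraph collecting constants into $J_\epsilon$.
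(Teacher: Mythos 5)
Your overall route is the same as the paper's: assign one multiplier per constraint of the LP in \eqref{eq:lp} (a free $\nu_{i+1}$ for each affine equality, nonnegative multipliers $\xi^{+},\xi^{-}$ for the input box, and nonnegative $\mu_{i,j},\tau_{i,j},\lambda_{i,j}$ for the three relaxed-ReLU inequalities on $\mathcal I_i$), form the Lagrangian, and set the coefficient of each primal variable to zero; the box indeed dualizes to $-x^T\hat\nu_1-\epsilon\|\hat\nu_1\|_1$, and the constants collect into $J_\epsilon$ exactly as you describe. The paper simply eliminates the equality-constrained coordinates ($\mathcal I_i^-$, $\mathcal I_i^+$) rather than carrying free multipliers for them, which is cosmetic.

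The gap is in the step you yourself flag as delicate: stationarity plus nonnegativity does \emph{not} force the $[\cdot]_+/[\cdot]_-$ form, and the surviving freedom on $\mathcal I_i$ is not a single scalar. For $j\in\mathcal I_i$ the dual constraints are $(u_{i,j}-\ell_{i,j})\lambda_{i,j}-\mu_{i,j}-\tau_{i,j}=\hat\nu_{i,j}$ and $\nu_{i,j}=u_{i,j}\lambda_{i,j}-\mu_{i,j}$ with $\lambda_{i,j},\mu_{i,j},\tau_{i,j}\ge 0$: two equations in three nonnegative unknowns, so the dual-feasible set is a two-parameter family per coordinate (one can inflate $\lambda_{i,j}$ above $[\hat\nu_{i,j}]_+/(u_{i,j}-\ell_{i,j})$ and compensate through $\mu_{i,j}+\tau_{i,j}$), and $\nu_{i,j}$ then ranges beyond the stated leaky-ReLU expression. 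The theorem's network corresponds to the particular choice $(u_{i,j}-\ell_{i,j})\lambda_{i,j}=[\hat\nu_{i,j}]_+$ and $\mu_{i,j}+\tau_{i,j}=[\hat\nu_{i,j}]_-$, with $\alpha_{i,j}\in[0,1]$ the fraction of $[\hat\nu_{i,j}]_-$ assigned to $\mu_{i,j}$. To conclude that the dual \emph{is of this form} you need the additional argument the paper supplies: by complementary slackness the relaxed-ReLU upper bound and the lower bounds cannot be simultaneously tight (outside the degenerate case $\hat z_{i,j}=0$), so at a dual optimum either $\lambda_{i,j}=0$ or $\mu_{i,j}+\tau_{i,j}=0$, hence an optimal dual solution lies inside the restricted $\alpha$-parameterized family; conversely every member of that family is dual feasible, so maximizing $J_\epsilon(x,g_\theta(c,\alpha))$ over $\alpha\in[0,1]$ attains the dual optimum. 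Add that argument (or an equivalent one showing that inflating $\lambda_{i,j}$ beyond its minimal value can only decrease the objective), and your outline coincides with the paper's proof.
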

\begin{proof}
In detail, we associate the following dual variables with each of the constraints
\begin{equation}
  \begin{split}
    \hat{z}_{i+1} = W_i z_i + b_i \Rightarrow \nu_{i+1} &\in \mathbb{R}^{|\hat{z}_{i+1}|} \\
    z_1 \leq x + \epsilon  \Rightarrow \xi^+ &\in \mathbb{R}^{|x|} \\
    -z_1 \leq -x + \epsilon  \Rightarrow \xi^- &\in \mathbb{R}^{|x|} \\
    -z_{i,j} \leq 0  \Rightarrow \mu_{i,j} &\in \mathbb{R} \\
    \hat{z}_{i,j} - z_{i,j} \leq 0  \Rightarrow \tau_{i,j} &\in \mathbb{R} \\
    -u_{i,j} \hat{z}_{i,j} + (u_{i,j}-\ell_{i,j}) z_{i,j} \leq -u_{i,j} \ell_{i,j}  \Rightarrow \lambda_{i,j} &\in \mathbb{R}
  \end{split}
\end{equation}
where we note that can easily eliminate the dual variables corresponding to the
$z_{i,j} = 0$ and $z_{i,j} = \hat{z}_{i,j}$ from the optimization problem, so we
don't define explicit dual variables for these; we also note that $\mu_{i,j}$,
$\tau_{i,j}$, and $\lambda_{i,j}$ are only defined for $i,j$ such that $j \in
\mathcal{I}_i$, but we keep the notation as above for simplicity.  With these
definitions, the dual problem becomes

\begin{equation}
  \begin{aligned}
    &&\maximize &  \bigg(\big.- (x+\epsilon)^T \xi^+ + (x-\epsilon)^T \xi^-\\
    &&&-\sum_{i=1}^{k-1} \nu_{i+1}^T b_i + \sum_{i=2}^{k-1}\lambda_i^T (u_i \ell_i)\big.\bigg) \\
    &&\subjectto& \;\;\\
    && \nu_k &= -c \\
    && \nu_{i,j} &= 0, \; j \in \mathcal{I}^-_i \\
    && \nu_{i,j} &= (W_{i}^T\nu_{i+1})_j, \;\; j \in \mathcal{I}^+_i \\
    &&& \mkern-134mu \left . \begin{aligned}
         \bigg(\big.(u_{i,j} - \ell_{i,j}) \lambda_{i,j}&  \\
         - \mu_{i,j} - \tau_{i,j}\big.\bigg) & = (W_{i}^T\nu_{i+1})_j \\
         \nu_{i,j} &= u_{i,j} \lambda_{i,j} - \mu_i
      \end{aligned} \right \} \; \begin{aligned}
      							&i=2,\ldots,k-1\\
      							&j \in \mathcal{I}_i 
      							\end{aligned}\\
    && W_1^T\nu_2 &= \xi^+ - \xi^{-} \\
    && \lambda, \tau, \mu, \xi^{+}, \xi^{-} &\geq 0
  \end{aligned}
\end{equation}

The key insight we highlight here is that \emph{the dual problem can also be
written in the form of a deep network}, which provides a trivial way to find
feasible solutions to the dual problem, which can then be optimized over.
Specifically, consider the constraints
\begin{equation}
  \begin{split}
    (u_{i,j} - \ell_{i,j}) \lambda_{i,j} - \mu_{i,j} - \tau_{i,j} & = (W_{i}^T\nu_{i+1})_j \\
    \nu_{i,j} & = u_{i,j} \lambda_{i,j} - \mu_i.
  \end{split}
\end{equation}
Note that the dual variable $\lambda$ corresponds to the upper bounds in the
convex ReLU relaxation, while $\mu$ and $\tau$ correspond to the lower bounds $z
\geq0$ and $z \geq \hat{z}$ respectively; by the complementarity property, we
know that at the optimal solution, these variables will be zero if the ReLU
constraint is non-tight, or non-zero if the ReLU constraint is tight.  Because
we cannot have the upper and lower bounds be simultaneously tight (this would
imply that the ReLU input $\hat{z}$ would exceed its upper or lower bound
otherwise), we know that either $\lambda$ or $\mu + \tau$ must be zero.  This
means that at the optimal solution to the dual problem
\begin{equation}
  \begin{split}
  (u_{i,j} - \ell_{i,j}) \lambda_{i,j} & = [(W_{i}^T\nu_{i+1})_j]_+ \\
  \tau_{i,j} + \mu_{i,j} & = [(W_{i}^T\nu_{i+1})_j]_-
  \end{split}
\end{equation}
i.e., the dual variables capture the positive and negative portions of
$(W_{i}^T\nu_{i+1})_j$ respectively.  Combining this with the constraint that 
\begin{equation}
  \nu_{i,j} = u_{i,j} \lambda_{i,j} - \mu_i
\end{equation}
means that
\begin{equation}
  \nu_{i,j} = \frac{u_{i,j}}{u_{i,j} - \ell_{i,j}} [(W_{i}^T\nu_{i+1})_j]_+ -
  \alpha [(W_{i}^T\nu_{i+1})_j]_-
\end{equation}
for  $j \in \mathcal{I}_i$ and for some $0 \leq \alpha \leq 1$ (this accounts for the fact that we can either
put the ``weight'' of $[(W_{i}^T\nu_{i+1})_j]_-$ into $\mu$ or $\tau$, which
will or will not be passed to the next $\nu_i$).  This is exactly a type of
leaky ReLU operation, with a slope in the positive portion of $u_{i,j}/(u_{i,j}
- \ell_{i,j})$ (a term between 0 and 1), and a negative slope anywhere between
0 and 1.  Similarly, and more simply, note that $\xi^+$ and $\xi^{-}$ 
denote the positive and negative portions of $W_1^T \nu_2$, so we can replace
these terms with an absolute value in the objective.  Finally, we note
that although it is possible to have $\mu_{i,j} > 0$ and $\tau_{i,j} > 0$
simultaneously, this corresponds to an activation that is identically zero
pre-ReLU (both constraints being tight), and so is expected to be relatively
rare.   Putting this all
together, and using $\hat{\nu}$ to denote ``pre-activation'' variables in the
dual network, we can write the dual problem in terms of the network
\begin{equation}
  \begin{split}
    \nu_k & = -c \\
    \hat{\nu}_i & = W_i^T \nu_{i+1}, i=k-1,\ldots,1 \\
    \nu_{i,j} & = \left \{
      \begin{array}{ll}
        0 & j \in \mathcal{I}^-_i \\
        \hat{\nu}_{i,j} & j \in \mathcal{I}^+_i \\
        \frac{u_{i,j}}{u_{i,j} - \ell_{i,j}} [\hat{\nu}_{i,j}]_+ - \alpha_{i,j} [\hat{\nu}_{i,j}]_- & j \in \mathcal{I}_i, \\
      \end{array} \right .\\
      &\kern 1.5in \for i= k-1, \dots, 2
  \end{split}
\end{equation}
which we will abbreviate as $\nu = g_{\theta}(c,\alpha)$ to emphasize the fact
that $-c$ acts as the ``input'' to the network and $\alpha$ are per-layer inputs
we can also specify (for only those activations in $\mathcal{I}_i$), where
$\nu$ in this case is shorthand for all the $\nu_i$ and $\hat{\nu}_i$
activations.

The final objective we are seeking to optimize can also be written
\begin{equation}
  \begin{split}
    J_\epsilon(x, \nu)  =& -\sum_{i=1}^{k-1} \nu_{i+1}^T b_i - (x+\epsilon)^T[\hat{\nu}_1]_+ + (x-\epsilon)^T[\hat{\nu}_1]_- \\
    & + \sum_{i=2}^{k-1}\sum_{j \in \mathcal{I}_i} \frac{u_{i,j} \ell_{i,j}}{u_{i,j} - \ell_{i,j}}[\hat{\nu}_{i,j}]_+ \\
     =& -\sum_{i=1}^{k-1} \nu_{i+1}^T b_i - x^T\hat{\nu}_1 - \epsilon \|\hat{\nu}_1\|_1\\
     & + \sum_{i=2}^{k-1}\sum_{j \in \mathcal{I}_i} \ell_{i,j} [\nu_{i,j}]_+ \\
  \end{split}
\end{equation}
\end{proof}

\subsection{Justification for Choice in $\alpha$}
\label{app:alpha}
While any choice of $\alpha$ results in a lower bound via the dual problem, 
the specific choice of $\alpha = \frac{u_{i,j}}{u_{i,j} - \ell_{i,j}}$ is
also motivated by an alternate derivation of the dual problem from the perspective 
of general conjugate functions. 
 We can represent the adversarial problem from \eqref{eq:adversarial_polytope}
 in the following, general formulation 
\begin{equation}
  \begin{aligned}
  \minimize \;\; & c^T \hat{z}_k + f_1(z_1) + \sum_{i=2}^{k-1}f_i(\hat z_i, z_i)\\
  \subjectto \;\; & \hat{z}_{i+1} = W_i z_i + b_i, \; i=1,\ldots,k-1 \\
\end{aligned}
\end{equation}
where $f_1$ represents some input condition and $f_i$ represents some non-linear connection between layers. For example, we can take $f_i(\hat z_i, z_i) = I(\max(\hat z_i, 0) = z_i)$ to get ReLU activations, and take $f_1$ to be the indicator function for an $\ell_\infty$ ball with 
radius $\epsilon$ to get the adversarial problem in an $\ell_\infty$ ball for a ReLU network. 

Forming the Lagrangian, we get
\begin{equation}
  \begin{aligned}
  \mathcal L(z,\nu, \xi)  &= c^T \hat{z}_k + \nu_k^T \hat{z}_k + f_1(z_1) - \nu_2^TW_1z_1 \\
  &+ \sum_{i=2}^{k-1}\left( f_i(\hat z_i, z_i) - \nu_{i+1}^TW_i z_i + \nu_{i}^T\hat{z}_{i} \right)\\
  &-\sum_{i=1}^{k-1} \nu_{i+1}^Tb_i
\end{aligned}
\end{equation}

\paragraph{Conjugate functions} We can re-express this using conjugate functions
defined as 
$$f^*(y) = \max_x y^Tx - f(x)$$
but specifically used as 
$$-f^*(y) = \min_x f(x)-y^Tx $$

Plugging this in, we can minimize over each $\hat{z}_i, z_i$ pair independently 
\begin{equation}
\begin{aligned}
&\min_{z_1} f_1(z_1) - \nu_2^TW_1z_1 = -f_1^*(W_1^T\nu_2)\\
&\min_{\hat z_i, z_i} f_i(\hat z_i, z_i) - \nu_{i+1}^TW_iz_i + \nu_i^T \hat z_i \\
&\quad\quad\quad= -f_i^*(-\nu_i, W_i^T\nu_{i+1}), \; i=2, \ldots, k-1\\
&\min_{\hat z_k} c^T\hat z_k + \nu_k^T \hat z_k = I(\nu_k = -c)
\end{aligned}
\end{equation}
Substituting the conjugate functions into the Lagrangian, and letting $\hat \nu_i = W_i^T \nu_{i+1}$, we 
get
\begin{equation}
\label{eq:conjugate}
  \begin{aligned}
  \maximize_{\nu} \;\; &  -f^*_1(\hat \nu_1) - \sum_{i=2}^{k-1}f^*_i(-\nu_i, \hat \nu_i) -\sum_{i=1}^{k-1} \nu_{i+1}^Tb_i\\
  \subjectto \;\; & \nu_k = -c\\
  & \hat \nu_i = W_i^T \nu_{i+1}, \; i = 1, \dots, k-1
\end{aligned}
\end{equation}
This is almost the form of the dual network. The last step is to plug in the indicator function for the 
outer bound of the ReLU activation (we denote the ReLU polytope) for $f_i$ and derive $f^*_i$. 
\paragraph{ReLU polytope}
Suppose we have a ReLU polytope 
\begin{equation}
\begin{aligned}
\mathcal S_i = \{(\hat z_i, z_i) : \hat z_{i,j} &\geq 0, \\
z_{i,j} &\geq \hat z_{i,j}, \\
-u_{i,j}\hat z_{i,j} + (u_{i,j} - \ell_{i,j})z_{i,j} &\leq -u_{i,j}\ell_{i,j}\}
\end{aligned}
\end{equation}
So $I_\mathcal{S}$ is the indicator for this set, and $I^*_\mathcal{S}$ is its conjugate. We will omit subscripts $(i,j)$ for brevity, but we can do this case by case elementwise. 
\begin{enumerate}
\item If $u \leq 0$ then $\mathcal S \subset \{(\hat z, z) : z = 0\}$. \\
Then, $I^*_{\mathcal S}(\hat y, y) \leq \max_{\hat z} \hat y\cdot\hat z = I(\hat y=0)$.
\item If $\ell \geq 0$ then $\mathcal S \subset \{(\hat z, z) : \hat z = z\}$. \\
Then, $I^*_{\mathcal S}(\hat y, y) \leq \max_{z} \hat y\cdot z + y\cdot z = (\hat y + y)z = I(\hat y + y = 0)$. 
\item Otherwise $\mathcal S = \{(\hat z, z) : \hat z \geq 0, z \geq \hat z, -u\hat z + (u - \ell)z = -u\ell\}$. The maximum must occur either on the line $-u\hat z + (u-\ell)z = -u\ell$ 
over the interval $[0, u]$, or at the point $(\hat{z}, z) = (0,0)$ (so the maximum must have
value at least 0). We proceed to examine this last case. 
\end{enumerate}
Let $\mathcal{S}$ be the set of the third case. Then: 
\begin{equation}
\begin{aligned}
&I^*_{\mathcal{S}}(\hat y, y) \\
&= \left[\max_{0 < \hat z < u} y \cdot \frac{u}{u-\ell}(\hat z - \ell) + \hat y \cdot \hat z \right]_+\\
&= \left[\max_{0 < \hat z < u} \left(\frac{u}{u-\ell}y + \hat y\right) \hat z - \frac{u\ell}{u-\ell}y\right]_+\\
&= \left[\max_{0 < \hat z < u} y \cdot \frac{u}{u-\ell}(\hat z - \ell) + \hat y \cdot \hat z = g(\hat y, y) \right]_+\\
&= \left\{ \begin{aligned}
&\left[-\frac{u\ell}{u-\ell}y\right]_+ && \text{ if } \frac{u}{u-\ell}y + \hat y \leq 0\\
&\left[\left(\frac{u}{u-\ell}y + \hat y\right)u-\frac{u\ell}{u-\ell}y\right]_+ && \text{ if } \frac{u}{u-\ell}y + \hat y > 0
\end{aligned} \right.
\end{aligned}
\end{equation}
Observe that the second case is always larger than first, so we get a tighter
upper bound when $\frac{u}{u-\ell}y + \hat{y} \leq 0$. If we plug in $\hat{y} = -\nu$ and $y = \hat{\nu}$, this condition is equivalent to 
$$\frac{u}{u-\ell}\hat \nu \leq \nu$$
Recall that in the LP form, the forward pass in this case was defined by 
$$\nu = \frac{u}{u-\ell}[\hat{\nu}]_+ + \alpha[\hat{\nu}]_-$$
Then, $\alpha = \frac{u}{u-l}$ can be interpreted as the \emph{largest choice of $\alpha$ which
does not increase the bound} (because if $\alpha$ was any larger, we would enter the 
second case and add an additional $\left(\frac{u}{u-\ell}\hat{\nu} - \nu\right)u$ term to the bound). 

We can verify that using $\alpha = \frac{u}{u-\ell}$ results in the 
same dual problem by first simplifying the above to
$$I_\mathbb{S}^*(\-\nu, \hat \nu) = -l[{\nu}]_+$$
Combining this with the earlier two cases and plugging into \eqref{eq:conjugate} using $f^*_i = I^*_\mathcal{S}$ results in 
\begin{equation}
  \begin{aligned}
  \maximize_{\nu} \;\; &  -x^T\hat \nu_1 - f_1^*(\hat{\nu}_1)  -\sum_{i=1}^{k-1} \nu_{i+1}^Tb_i \\
                       &+ \sum_{i=2}^{k-1}\left(
  \sum_{j \in \mathcal I}l_{i,j} [\nu_{i,j}]_+\right)\\
  \subjectto \;\; \\
  & \mkern-18mu\nu_k = -c\\
  & \mkern-18mu\hat \nu_i = W_i^T \nu_{i+1}, \; i = 1, \dots, k-1\\
  & \mkern-18mu\nu_{i,j} = 0, \; i = 2,\ldots, k-1, \; j\in \mathcal I^-_i\\
  & \mkern-18mu\nu_{i,j} = \hat \nu_{i,j}, \;, i=2,\ldots, k-1, \; j \in \mathcal I^+_i\\
  & \mkern-18mu\nu_{i,j} = \frac{u_{i,j}}{u_{i,j}-l_{i,j}}\hat \nu_{i,j}, i=2, \ldots, k-1, \; j \in \mathcal I_i
\end{aligned}
\end{equation}
where the dual network here matches the one from \eqref{eq:dual_network} exactly when $\alpha = \frac{u_{i,j}}{u_{i,j}-l_{i,j}}$. 

\subsection{Proof of Theorem \ref{thm:robust_opt}}
In this section, we prove Theorem \ref{thm:robust_opt}, reproduced below:
\label{app:upper}
\begin{theorem*}
Let $L$ be a monotonic loss function that satisfies Property \ref{prop:invariance}. 
For any data point $(x,y)$, and $\epsilon>0$, the worst case adversarial loss from \eqref{eq:robust-opt} can be upper bounded with 
$$\max_{\|\Delta\|_\infty \leq \epsilon} L(f_\theta(x + \Delta), y) \leq L(-J_{\epsilon}(x, g_\theta(\mathbf{e}_{y} 1^T - I)), y)$$
where $J_{\epsilon}$ is as defined in \eqref{eq:J} for a given $x$ and $\epsilon$, and $g_\theta$ is as defined in \eqref{eq:dual_network} for the given model parameters $\theta$. 
\end{theorem*}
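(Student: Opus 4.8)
The plan is to reduce the robust-loss bound to the per-target-class dual bound of Theorem~\ref{thm:dual}, and then to pass from logits to losses using the two assumed structural properties of $L$. I would fix a data point $(x,y)$, a radius $\epsilon>0$, and an arbitrary $\Delta$ with $\|\Delta\|_\infty\le\epsilon$, and write $\hat z_k = f_\theta(x+\Delta)$, so that $\hat z_k\in\mathcal Z_\epsilon(x)\subseteq\tilde{\mathcal Z}_\epsilon(x)$ because the relaxed polytope is an outer bound on the adversarial polytope. For each class index $t$, I apply Theorem~\ref{thm:dual} with $c=e_y-e_t$ in \eqref{eq-primal-lp}: since the fixed choice of $\alpha$ is dual feasible, weak duality gives $J_\epsilon(x,g_\theta(e_y-e_t))\le\min_{z\in\tilde{\mathcal Z}_\epsilon(x)}(e_y-e_t)^Tz$, and evaluating that minimum at the particular feasible point $\hat z_k$ yields
\[
 J_\epsilon\big(x,g_\theta(e_y-e_t)\big)\;\le\;(\hat z_k)_y-(\hat z_k)_t .
\]
Negating and collecting these inequalities over all $t$ — equivalently, running the dual network $g_\theta$ on the matrix $e_y\mathbf 1^T-I$, whose $t$-th column is exactly $e_y-e_t$ — gives the componentwise vector inequality $f_\theta(x+\Delta)-(f_\theta(x+\Delta))_y\mathbf 1\le -J_\epsilon(x,g_\theta(e_y\mathbf 1^T-I))$. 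I would also note that the $y$-th column of $e_y\mathbf 1^T-I$ is zero, so $g_\theta$ produces all-zero activations there and $J_\epsilon(x,0)=0$ directly from \eqref{eq:J}; hence both sides of this vector inequality have $y$-th entry equal to $0$.

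Having this logit-level bound, I would convert it into a loss bound in two steps. First, Property~\ref{prop:invariance} applied with $a=(f_\theta(x+\Delta))_y$ gives $L(f_\theta(x+\Delta),y)=L\big(f_\theta(x+\Delta)-(f_\theta(x+\Delta))_y\mathbf 1,\,y\big)$. Second, the shifted logit vector is $\le -J_\epsilon(x,g_\theta(e_y\mathbf 1^T-I))$ componentwise and agrees with it in the $y$-th coordinate, so monotonicity of $L$ (monotone nondecreasing in each non-true-class coordinate with the true-class coordinate held fixed — equivalently, monotone decreasing in the margins $(\hat z_k)_y-(\hat z_k)_t$, the property shared by cross-entropy, hinge, and zero-one loss) gives $L(f_\theta(x+\Delta),y)\le L\big(-J_\epsilon(x,g_\theta(e_y\mathbf 1^T-I)),\,y\big)$. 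The right-hand side does not depend on $\Delta$, so taking the maximum over $\|\Delta\|_\infty\le\epsilon$ on the left finishes the argument.

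The calculational content here is light, since the hard algebra (deriving $g_\theta$ and $J_\epsilon$) is already done in the proof of Theorem~\ref{thm:dual}; the step that needs the most care is the bookkeeping around monotonicity — making precise that after the translational shift both the true logit vector and $-J_\epsilon(\cdot)$ have $y$-th coordinate $0$, so that ``monotone in the remaining coordinates'' is exactly the hypothesis being used, and pinning down the intended reading of ``monotonic'' accordingly. A secondary point to handle carefully is the matrix/vector reading of the statement: $g_\theta$ must be interpreted as acting columnwise on $e_y\mathbf 1^T-I$ (legitimate because, with the fixed $\alpha$, the dual network of \eqref{eq:dual_network} is a linear map of its input $-c$), and $J_\epsilon$ then returns the vector whose $t$-th entry is \eqref{eq:J} evaluated on the dual activations produced by the $t$-th column, which is the sense in which $J_\epsilon$ is ``vector valued.''
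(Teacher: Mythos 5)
Your proposal is correct and follows essentially the same route as the paper's proof: translational invariance to shift by the true-class logit, monotonicity of $L$ to pass from the componentwise margin bound to the loss, and the weak-duality lower bound of Theorem~\ref{thm:dual} with $c=e_y-e_t$ (the columns of $e_y 1^T - I$) combined with $\mathcal Z_\epsilon(x)\subseteq\tilde{\mathcal Z}_\epsilon(x)$. The only cosmetic difference is that you bound the loss pointwise for a fixed $\Delta$ and then take the maximum, whereas the paper first decouples the coordinatewise maxima over the polytope via an auxiliary function $h$; your explicit handling of the zero $y$-th column matches the paper's parenthetical remark that the $y$-th shifted coordinate is $0$.
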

\begin{proof}
First, we rewrite the problem using the adversarial polytope $\mathcal{Z}_\epsilon(x)$. 
$$\max_{\|\Delta\|_\infty \leq \epsilon} L(f_\theta(x + \Delta), y) = \max_{\hat{z}_k\in \mathcal{Z}_\epsilon(x)} L(\hat{z}_k, y)$$
Since $L(x,y) \leq L(x-a1,y)$ for all $a$, we have
\begin{equation}
\begin{aligned}
\max_{\hat{z}_k\in {\mathcal{Z}}_\epsilon(x)} L(\hat{z}_k, y) &\leq \max_{\hat{z}_k\in {\mathcal{Z}}_\epsilon(x)} L(\hat{z}_k - (\hat{z}_k)_y1, y)\\
&= \max_{\hat{z}_k\in {\mathcal{Z}}_\epsilon(x)} L((I - \mathbf{e}_y1^T)\hat{z}_k, y)\\
&= \max_{\hat{z}_k\in {\mathcal{Z}}_\epsilon(x)} L(C\hat{z}_k, y)
\end{aligned}
\end{equation}
where $C = (I - \mathbf{e}_y1^T)$. 
Since $L$ is a monotone loss function, we can upper bound this further by using the element-wise maximum over $[C\hat{z}_k]_i$ for $i\neq y$, and elementwise-minimum for $i=y$ (note, however, 
that for $i=y$, $[C\hat{z}_k]_i=0$). Specifically, we bound it as
$$\max_{\hat{z}_k\in {\mathcal{Z}}_\epsilon(x)} L(C\hat{z}_k, y) \leq L(h(\hat z_k))$$
where, if $C_i$ is the $i$th row of $C$, $h(z_k)$ is defined element-wise as 
$$h(z_k)_i = \max_{\hat{z}_k\in {\mathcal{Z}}_\epsilon(x)}C_i\hat{z}_k$$
This is exactly the adversarial problem from \eqref{eq:adversarial_polytope} (in its 
maximization form instead of a minimization). Recall that $J$ from \eqref{eq:J} is a lower bound on \eqref{eq:adversarial_polytope} (using $c=-C_i$). 
\begin{equation}
\label{eq:lower_bound}
J_{\epsilon}(x, g_\theta(-C_i)) \leq \min_{\hat{z}_k \in \mathcal{Z}_\epsilon(x)} -C_i^T \hat{z_k}
\end{equation}
Multiplying both sides by $-1$ gives us the following upper bound
$$-J_{\epsilon}(x, g_\theta(-C_i)) \geq \max_{\hat{z}_k \in \mathcal{Z}_\epsilon(x)} C_i^T \hat{z_k}$$
Applying this upper bound to $h(z_k)_i$, we conclude
$$h(z_k)_i \leq -J_{\epsilon}(x, g_\theta(-C_i))$$
Applying this to all elements of $h$ gives the final upper bound on the adversarial loss. 
$$\max_{\|\Delta\|_\infty \leq \epsilon} L(f_\theta(x + \Delta), y) \leq L(-J_{\epsilon}(x, g_\theta(\mathbf{e}_{y} 1^T - I)), y)$$
\end{proof}

\subsection{Proof of Corollary \ref{thm:guarantee}}
In this section, we prove Corollary \ref{thm:guarantee}, reproduced below:
\label{app:guarantee}
\begin{theorem*}
For a data point $x$ and $\epsilon>0$, if 
\begin{equation}
\label{eq:certificate}
\min_{y \neq f(x)} [J_{\epsilon}(x, g_\theta(\mathbf{e}_{f(x)} 1^T - I, \alpha))]_{y} \geq 0
\end{equation} 
then
the model is guaranteed to be robust around this data point. Specifically,
there does not exist an adversarial example $\tilde x$ 
such that $|\tilde x - x|_\infty \leq \epsilon$ and $f_\theta(\tilde x) \neq f_\theta(x)$. 
\end{theorem*}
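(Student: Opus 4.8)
The plan is to chain together weak LP duality (Theorem~\ref{thm:dual}), the containment $\mathcal{Z}_\epsilon(x) \subseteq \tilde{\mathcal{Z}}_\epsilon(x)$, and the definition of the predicted label as the argmax of the logits. First I would fix $y^\star = f_\theta(x)$ and observe that the $y$-th column of the matrix $e_{y^\star} 1^T - I$ is exactly $e_{y^\star} - e_y$; consequently, by linearity of the dual network $g_\theta$ in its ``input'' $c$ and of $J_\epsilon$ in $\nu$, the $y$-th component of the vector $J_\epsilon(x, g_\theta(e_{y^\star}1^T - I, \alpha))$ equals the scalar $J_\epsilon(x, g_\theta(e_{y^\star} - e_y, \alpha))$, i.e.\ the dual objective of the LP~\eqref{eq-primal-lp} with $c = e_{y^\star} - e_y$.

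Next I would invoke weak duality: for any dual-feasible $\alpha \in [0,1]$, Theorem~\ref{thm:dual} gives
$$J_\epsilon(x, g_\theta(e_{y^\star} - e_y, \alpha)) \;\le\; \min_{\hat z_k \in \tilde{\mathcal{Z}}_\epsilon(x)} (e_{y^\star} - e_y)^T \hat z_k .$$
Since replacing the ReLU equalities by their convex relaxation only enlarges the feasible set, $\mathcal{Z}_\epsilon(x) \subseteq \tilde{\mathcal{Z}}_\epsilon(x)$, so the right-hand side is in turn at most $\min_{\hat z_k \in \mathcal{Z}_\epsilon(x)} (e_{y^\star}-e_y)^T \hat z_k = \min_{\|\Delta\|_\infty \le \epsilon} \big( f_\theta(x+\Delta)_{y^\star} - f_\theta(x+\Delta)_{y} \big)$. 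Combining these inequalities, the hypothesis $[J_\epsilon(x, g_\theta(e_{f(x)}1^T - I,\alpha))]_y \ge 0$ for every $y \neq y^\star$ yields $f_\theta(x+\Delta)_{y^\star} \ge f_\theta(x+\Delta)_y$ for all such $y$ and all $\|\Delta\|_\infty \le \epsilon$. Hence $y^\star$ remains an argmax of the logits, so $f_\theta(x+\Delta) = y^\star = f_\theta(x)$; taking $\tilde x = x + \Delta$ shows no $\tilde x$ with $\|\tilde x - x\|_\infty \le \epsilon$ can have $f_\theta(\tilde x) \neq f_\theta(x)$.

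The argument is essentially a one-line consequence of the machinery already built, so there is no real analytical obstacle; the only points requiring care are (i) the bookkeeping that the vector-valued $J_\epsilon$ and matrix-valued $g_\theta$ decompose column-wise into the per-target-class dual problems, (ii) citing the containment $\mathcal{Z}_\epsilon(x) \subseteq \tilde{\mathcal{Z}}_\epsilon(x)$ explicitly, since this is precisely where soundness of the relaxation enters, and (iii) the mild convention on ties in the argmax defining $f_\theta$ — with the non-strict inequality $\ge 0$ we only conclude that $y^\star$ is \emph{a} maximizer, which suffices under the usual tie-breaking rule but deserves a remark.
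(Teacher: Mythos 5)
Your proposal is correct and is essentially the paper's own argument: the paper compresses your weak-duality-plus-containment chain ($J_\epsilon \le \min_{\tilde{\mathcal{Z}}_\epsilon(x)} c^T\hat z_k \le \min_{\mathcal{Z}_\epsilon(x)} c^T\hat z_k$ with $c = e_{f(x)} - e_y$) into the single statement that $J$ lower-bounds the true adversarial problem, and then concludes exactly as you do that $(\hat z_k)_{f(x)} \ge (\hat z_k)_y$ throughout the polytope, so the prediction cannot flip. One minor caveat: the per-target-class decomposition holds because $g_\theta$ processes the columns of $e_{f(x)}1^T - I$ independently, not because $g_\theta$ or $J_\epsilon$ is literally linear (the $[\cdot]_+$ and $\ell_1$ terms make them only piecewise linear), which does not affect your conclusion.
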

\begin{proof}
Recall that $J$ from \eqref{eq:J} is a lower bound on \eqref{eq:adversarial_polytope}. 
Combining this fact with the certificate in \eqref{eq:certificate}, we get that
for all $y \neq f(x)$, 
$$\min_{\hat{z}_k \in \mathcal{Z}_\epsilon(x)}  (\hat{z}_k)_{f(x)} - (\hat{z}_k)_{y} \geq 0$$
Crucially, this means that for every point in the adversarial polytope and 
for any alternative label $y$, $(\hat{z}_k)_{f(x)} \geq (\hat{z}_k)_{y}$, 
so the classifier cannot change its output within the adversarial polytope and is robust around $x$. 
\end{proof}

\section{Experimental Details}
\label{app:experiments}

\subsection{2D Example}
\subparagraph{Problem Generation}
We incrementally randomly sample 12 points within the $[0,1]$
$xy$-plane, at each point waiting until we find a sample that is at least $0.16$
away from other points via $\ell_\infty$ distance, and assign each point a
random label.  We then attempt to learn a robust classifier that will correctly
classify all points with an $\ell_\infty$ ball of $\epsilon = 0.08$. 
\subparagraph{Parameters}
 We use the Adam
optimizer \citep{kingma2015adam} (over the entire batch of samples) with a learning rate of
0.001.  

\begin{figure*}[t]
  \begin{center}
    \includegraphics[width=2in]{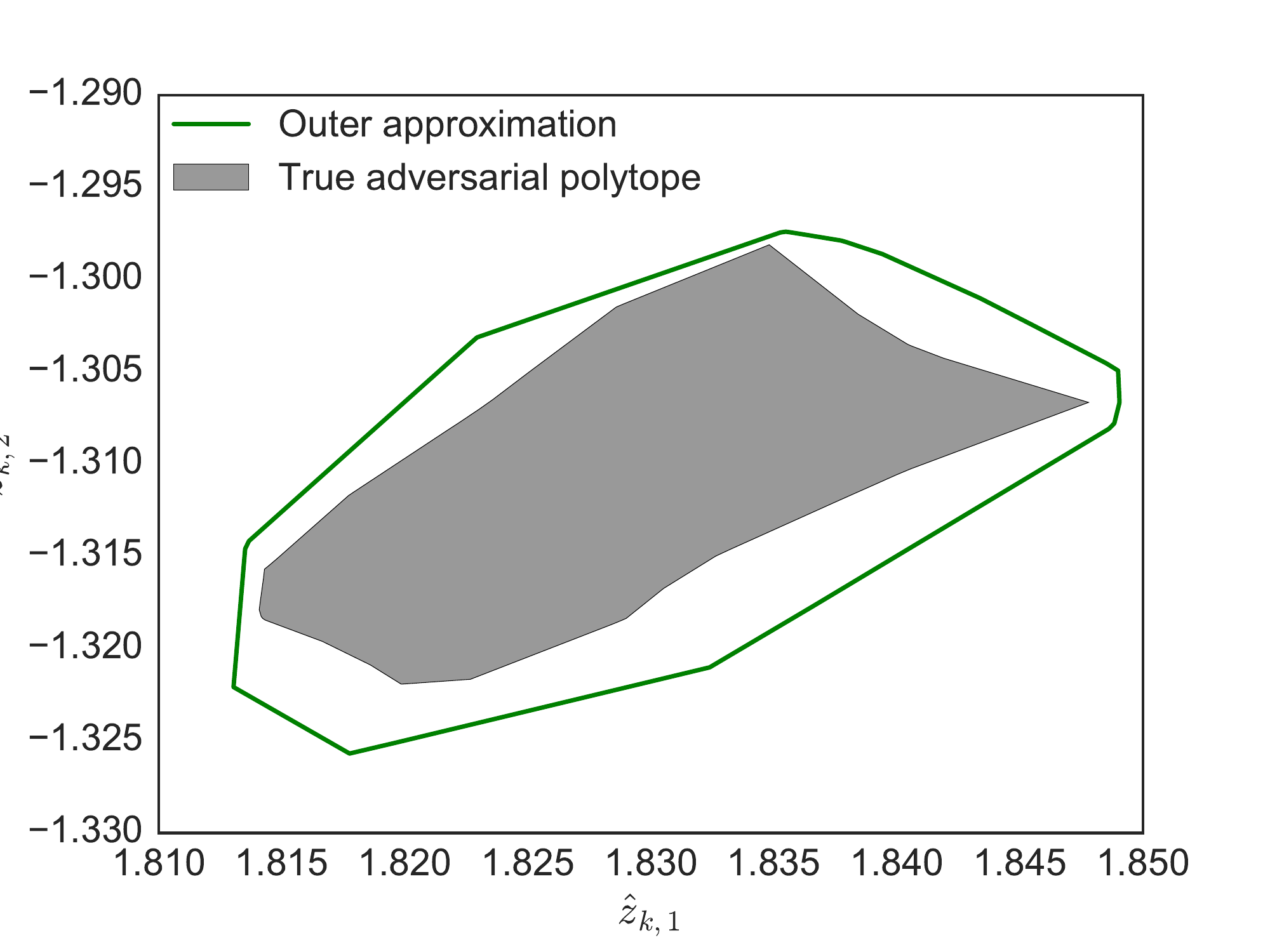}
    \includegraphics[width=2in]{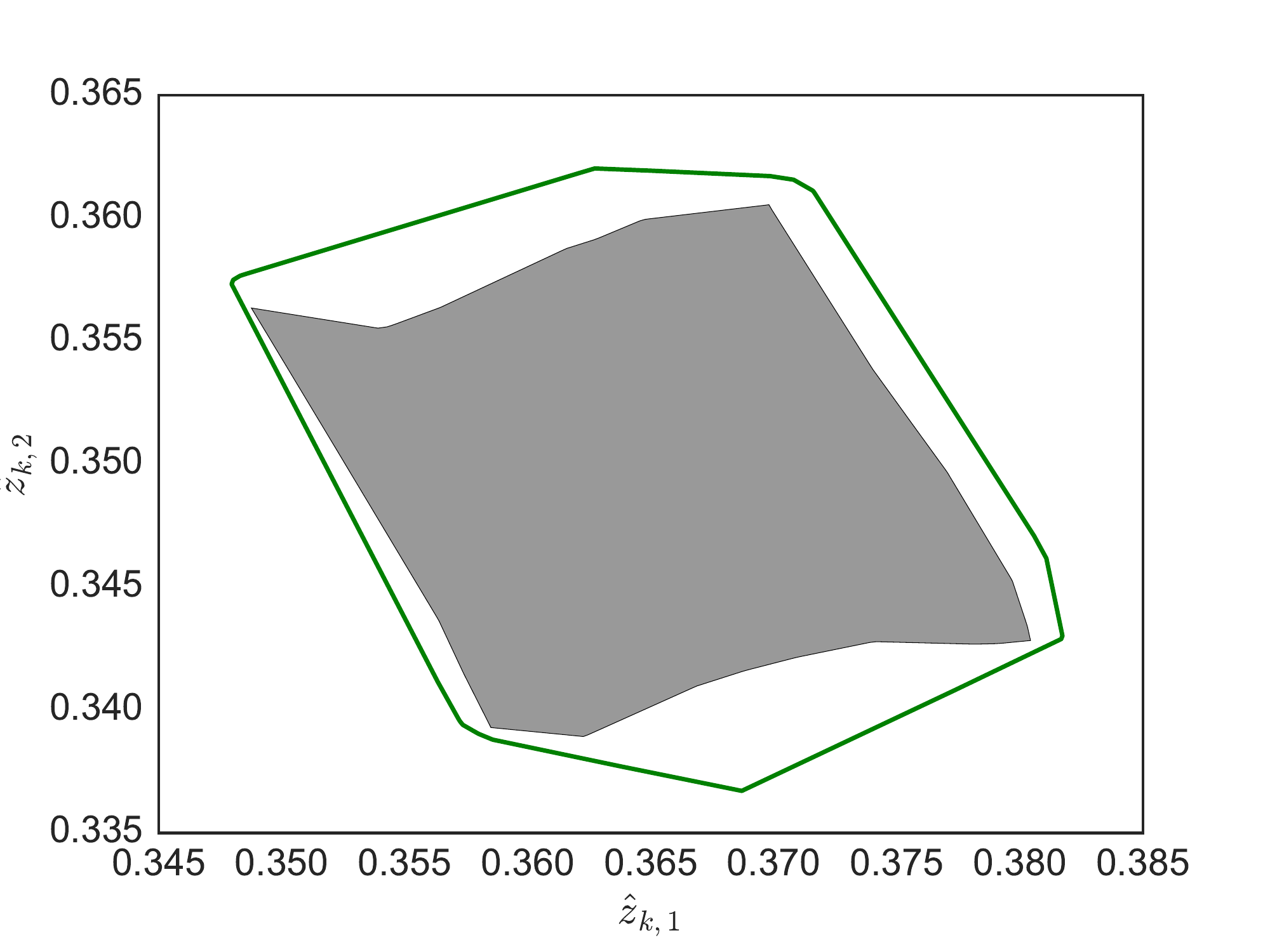}
    \includegraphics[width=2in]{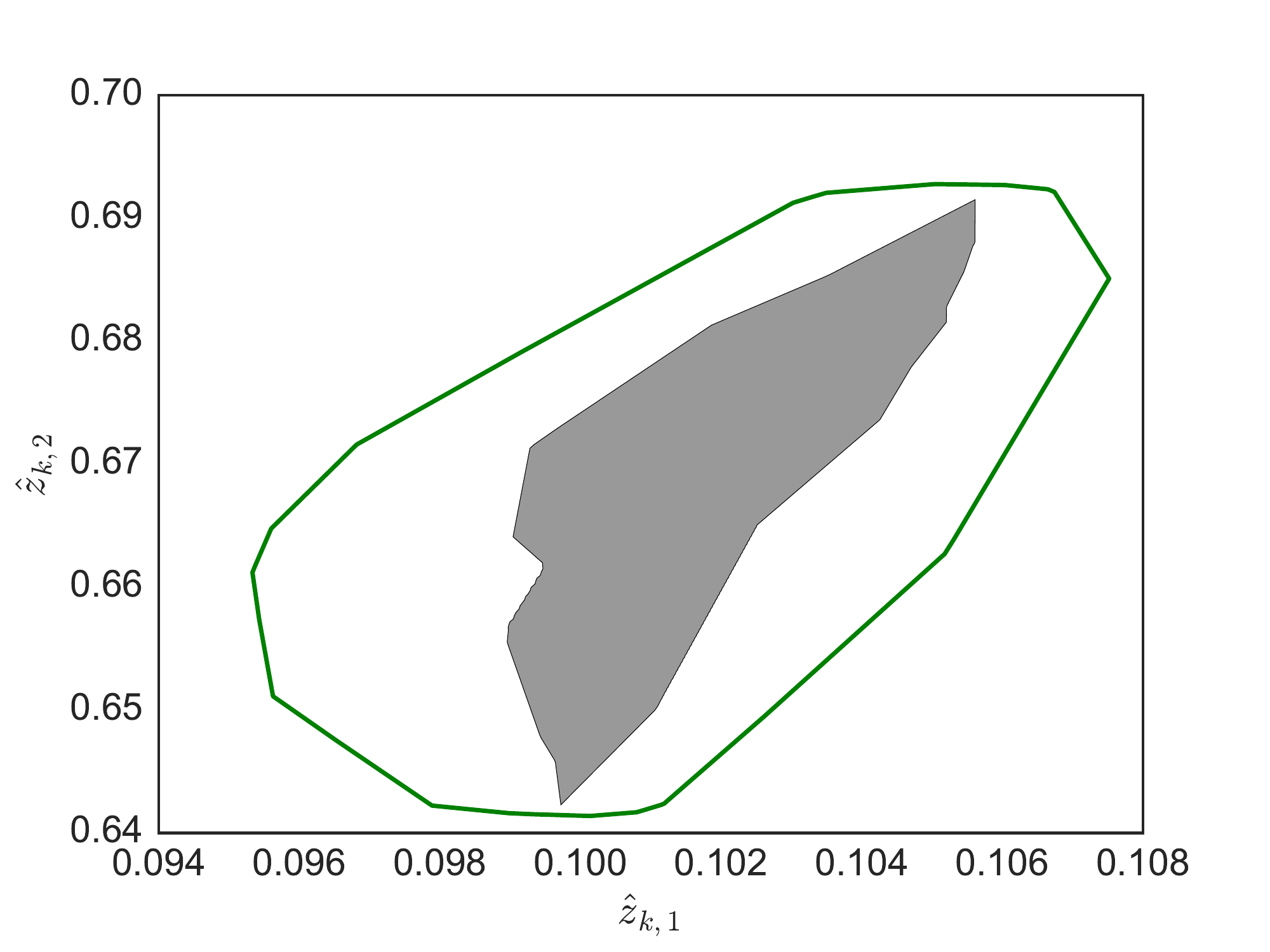} \\
    \includegraphics[width=2in]{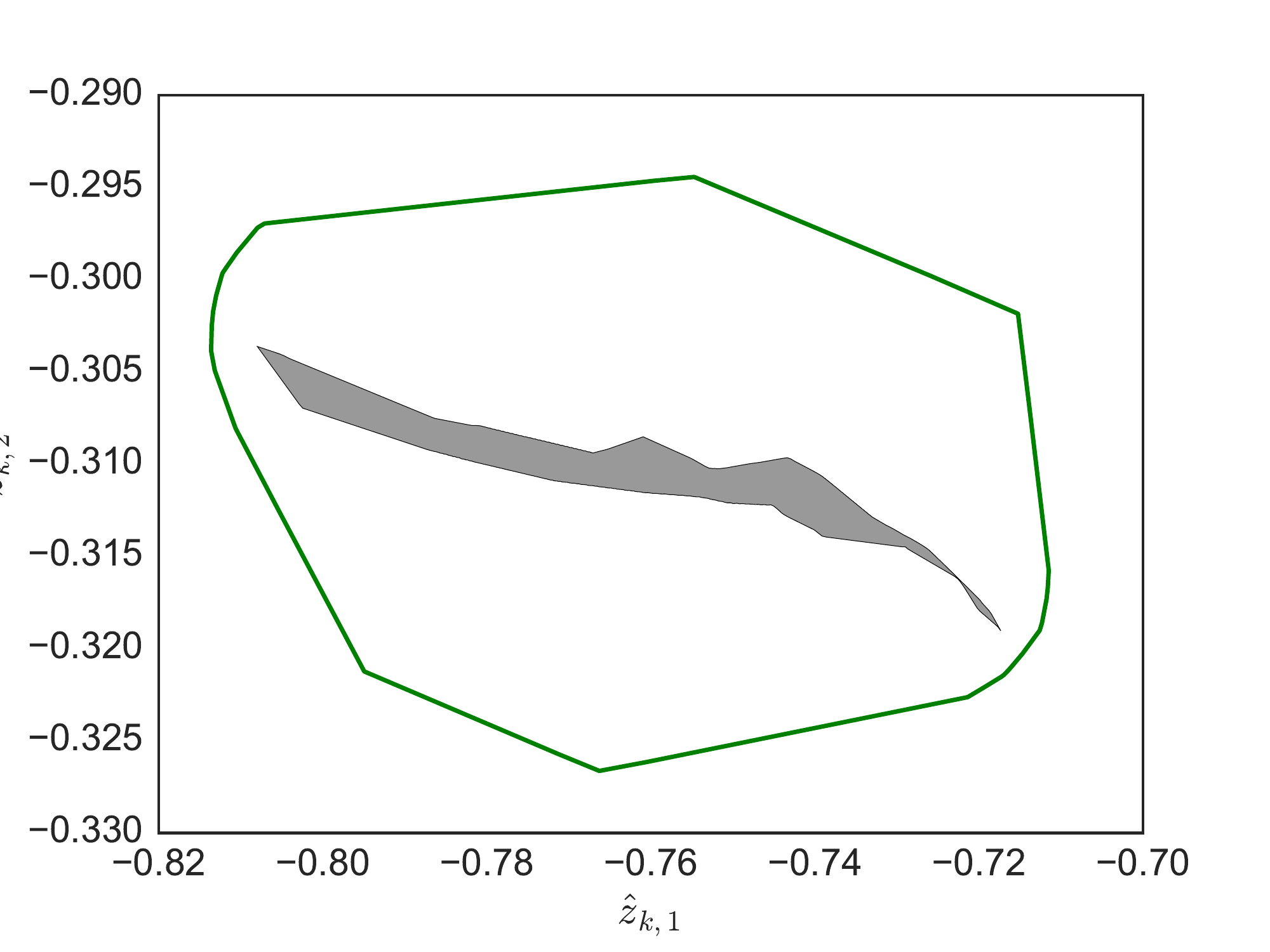}
    \includegraphics[width=2in]{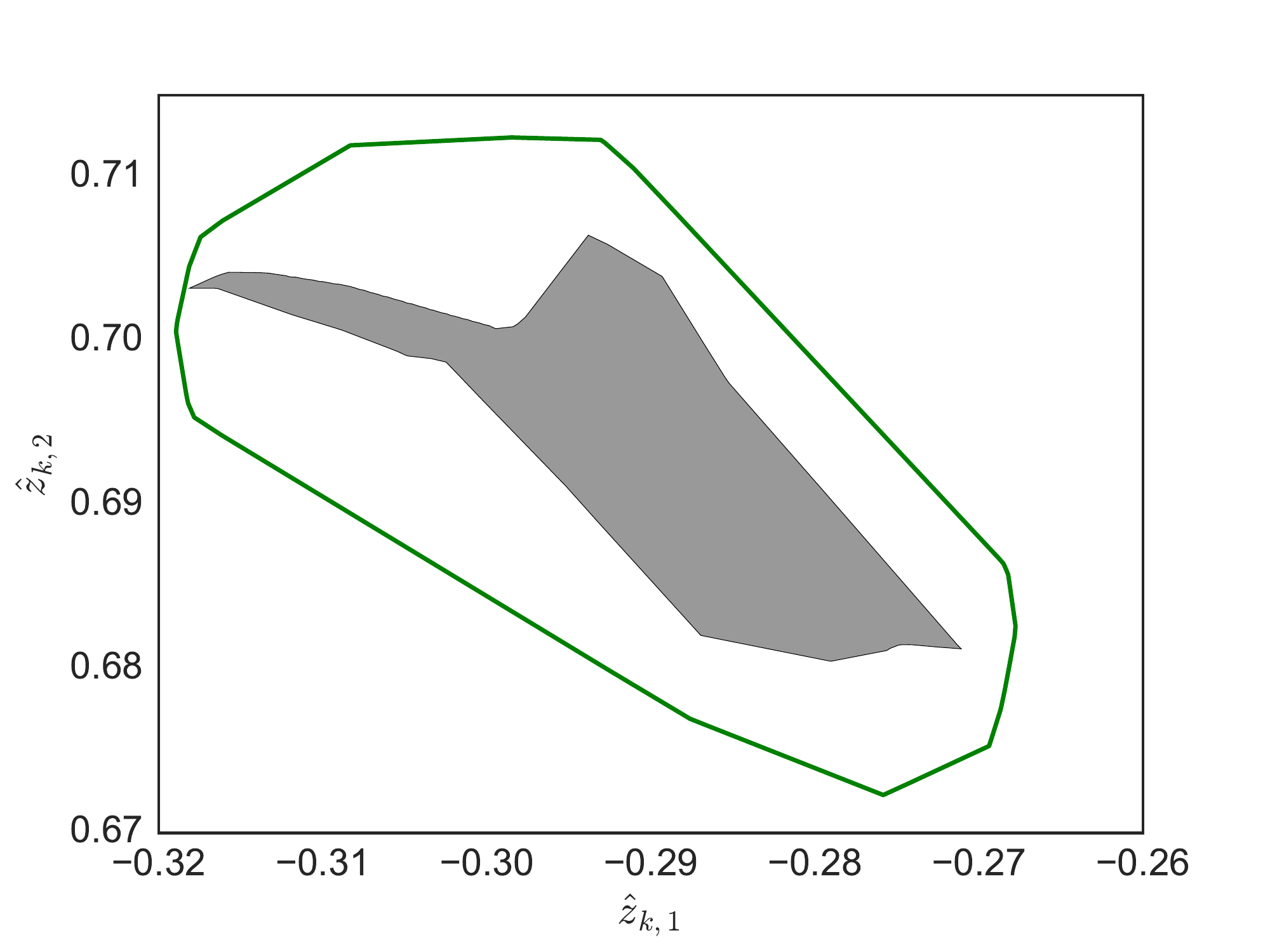}
    \includegraphics[width=2in]{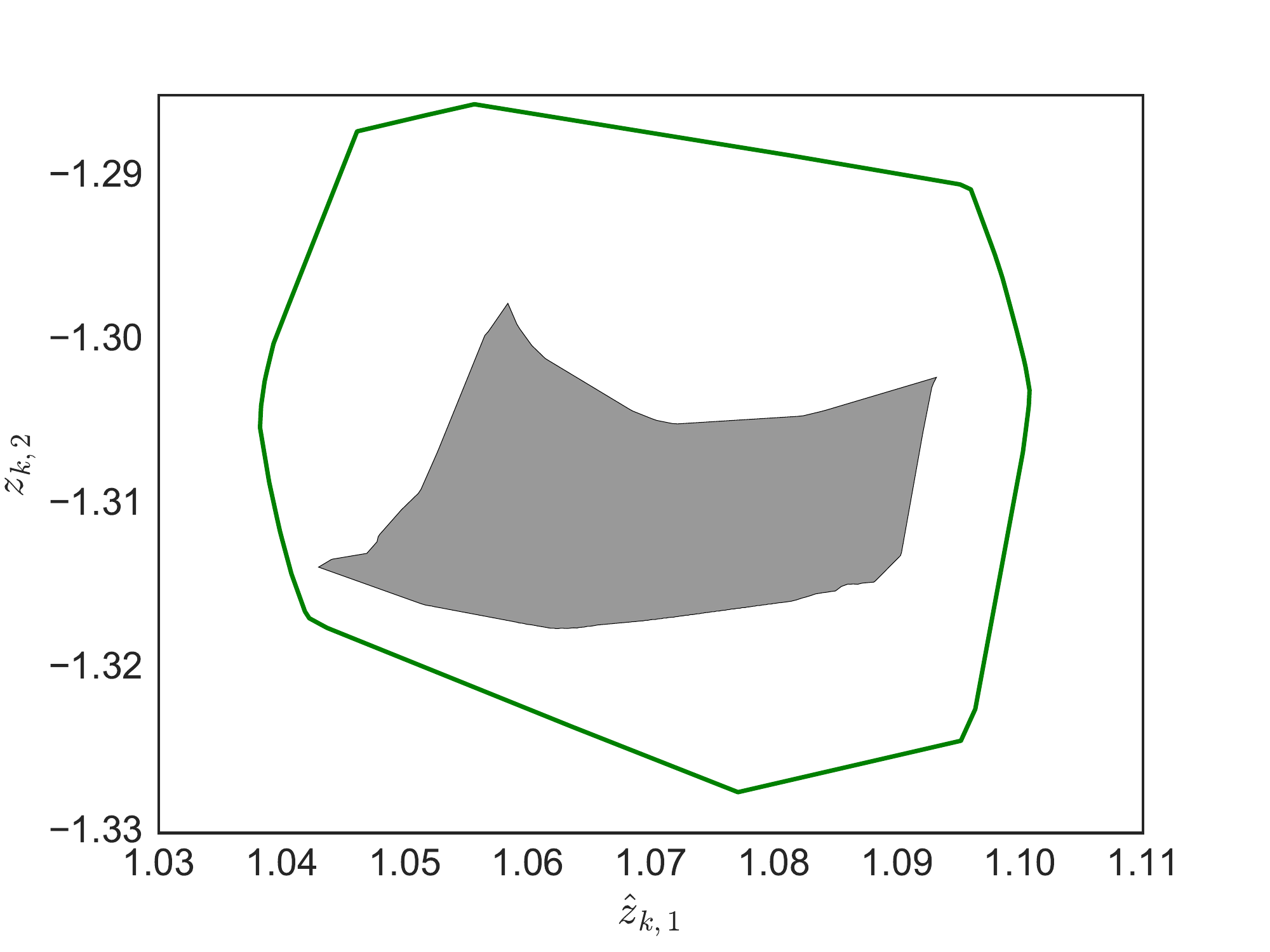} \\
    \includegraphics[width=2in]{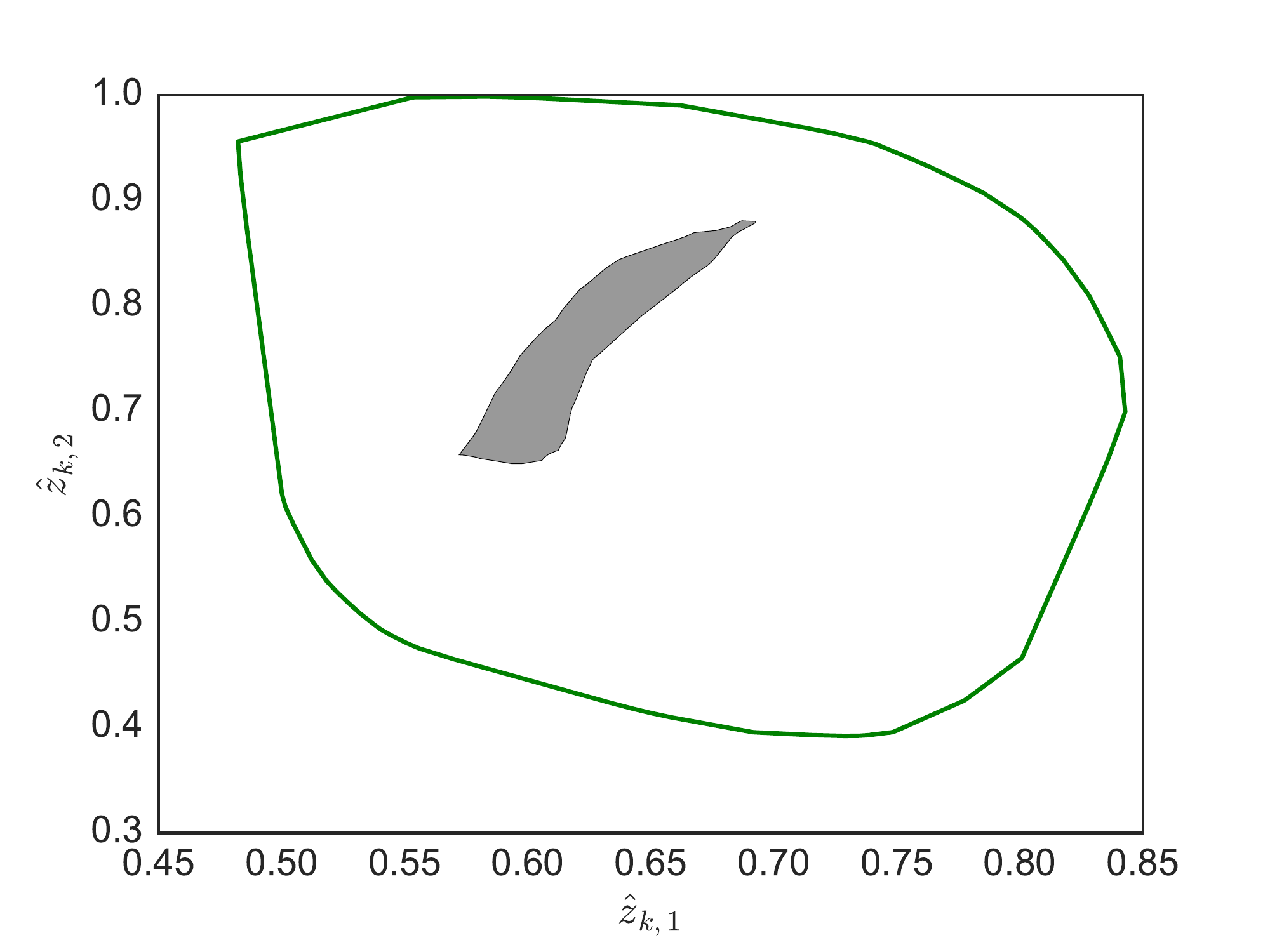}
    \includegraphics[width=2in]{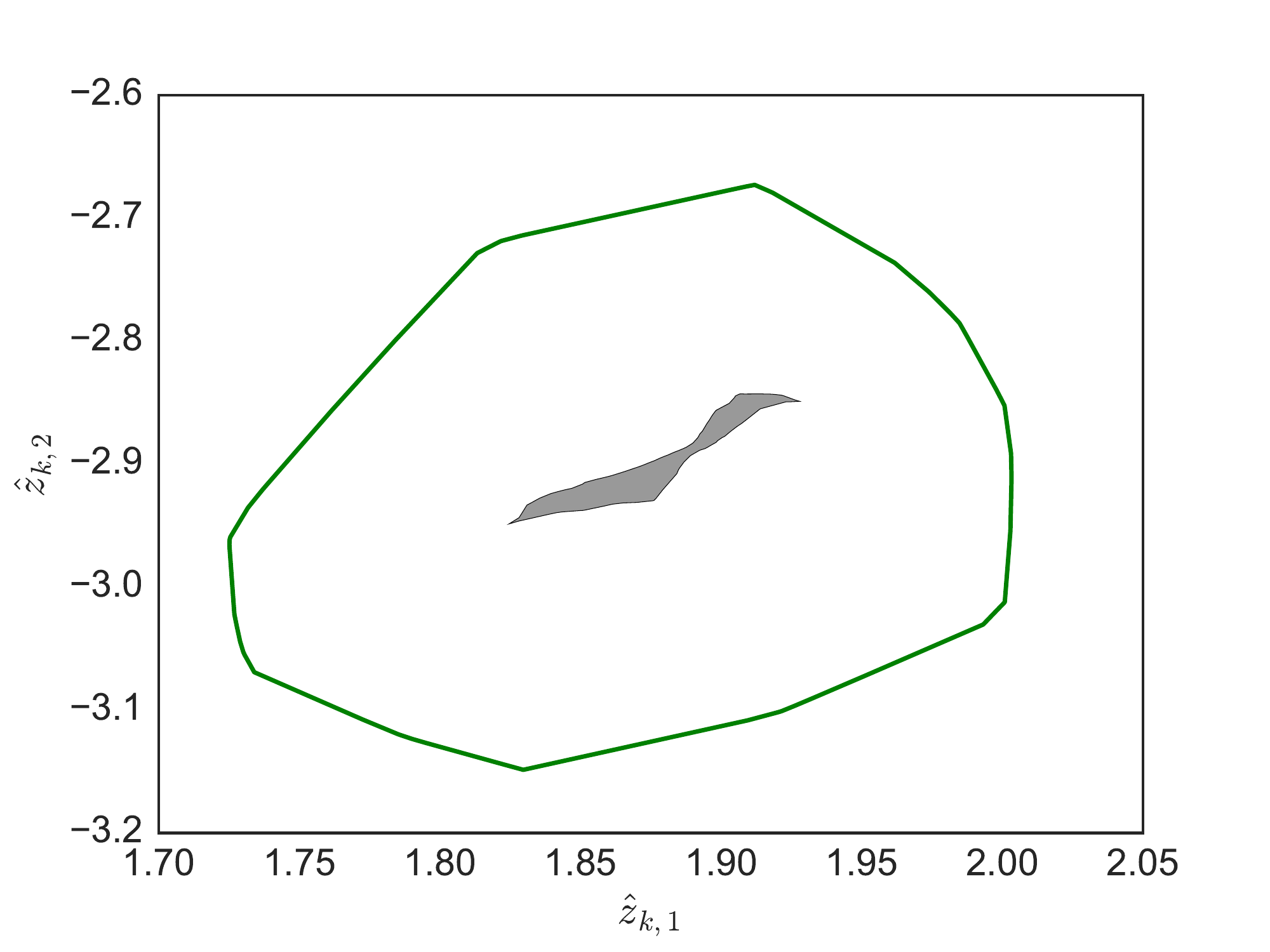}
    \includegraphics[width=2in]{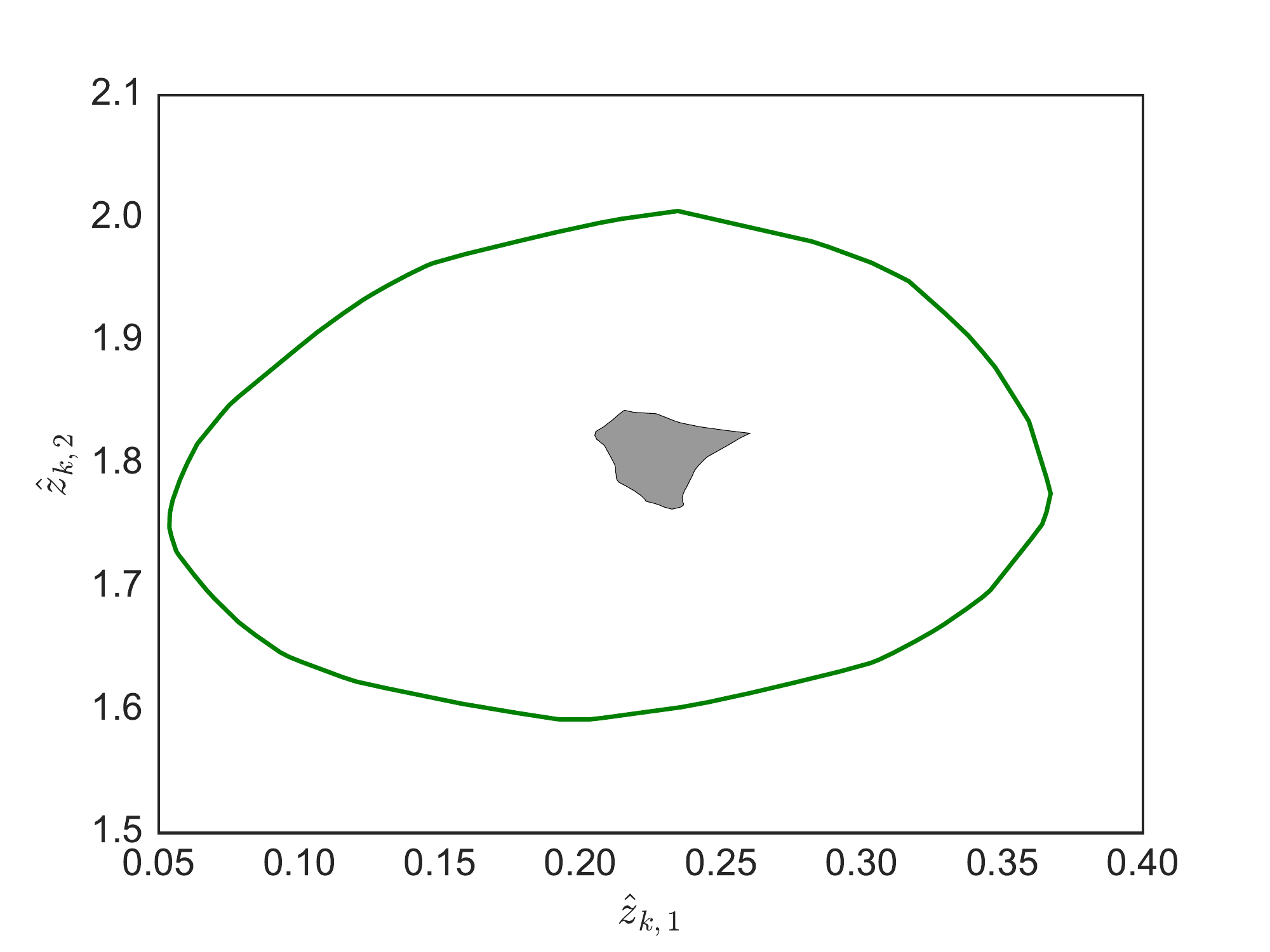}
  \end{center}
  \caption{Illustrations of the true adversarial polytope (gray) and our convex
    outer approximation (green) for a random 2-100-100-100-100-2 network with
    $\mathcal{N}(0,1/\sqrt{n})$ weight initialization.  Polytopes are shown for
    $\epsilon=0.05$ (top row), $\epsilon=0.1$ (middle row), and $\epsilon=0.25$
    (bottom row).}
  \label{fig-polytopes}
\end{figure*}    

\subparagraph{Visualizations of the Convex Outer Adversarial Polytope} 
\label{app:tight}
We
consider some simple cases of visualizing the outer approximation to the
adversarial polytope for random networks in Figure \ref{fig-polytopes}.  
Because the output space is two-dimensional we can easily 
visualize the polytopes in the output layer, and because the input space is two
dimensional, we can easily cover the entire input space densely to enumerate the
true adversarial polytope.  In this experiment, we initialized the weights of
the all layers to be normal $\mathcal{N}(0,1/\sqrt{n_{\mathrm{in}}})$ and biases
normal $\mathcal{N}(0,1)$ (due to scaling, the actual absolute value of weights
is not particularly important except as it relates to $\epsilon$).  Although
obviously not too much should be read into these experiments with random
networks, the main takeaways are that 1) for ``small'' $\epsilon$, the outer bound
is an extremely good approximation to the adversarial polytope; 2) as $\epsilon$
increases, the bound gets substantially weaker.  This is to be expected: for
small $\epsilon$, the number of elements in $\mathcal{I}$ will also be
relatively small, and thus additional terms that make the bound lose are
expected to be relatively small (in the extreme, when no activation can change,
the bound will be exact, and the adversarial polytope will be a convex set).
However, as $\epsilon$ gets larger, more activations enter the set
$\mathcal{I}$, and the available freedom in the convex relaxation of each ReLU
increases substantially, making the bound looser.   Naturally, the question
of interest is how tight this bound is for networks that are actually trained to
minimize the robust loss, which we will look at shortly.

\paragraph{Comparison to Naive Layerwise Bounds}  One additional point is worth
making in regards to the bounds we propose.  It would also be possible to
achieve a naive ``layerwise'' bound by iteratively determining absolute
allowable ranges for each activation in a network (via a simple norm bound), 
then for future layers, assuming each activation can vary
arbitrarily within this range.  This provides a simple iterative formula for
computing layer-by-layer absolute bounds on the coefficients, and similar
techniques have been used e.g. in Parseval Networks \citep{cisse2017parseval} to
produce more 
robust classifiers (albeit there considering $\ell_2$ perturbations instead of
$\ell_\infty$ perturbations, which likely are better suited for such an
approach).  Unfortunately, these naive bounds are extremely loose for
multi-layer networks (in the first hidden layer, they naturally match our bounds
exactly).  For instance, for the adversarial polytope shown in Figure
\ref{fig-polytopes} (top left), the actual adversarial polytope is contained
within the range
\begin{equation}
  \hat{z}_{k,1} \in [1.81, 1.85], \;\; \hat{z}_{k,2} \in [-1.33, -1.29]
\end{equation}
with the convex outer approximation mirroring it rather closely.  In
contrast, the layerwise bounds produce the bound:
\begin{equation}
  \hat{z}_{k,1} \in [-11.68, 13.47], \;\; \hat{z}_{k,2} \in [-16.36, 11.48].
\end{equation}
Such bounds are essentially vacuous in our case, which makes sense intuitively.
The naive bound has no way to exploit the ``tightness'' of activations that lie
entirely in the positive space, and effectively replaces the convex ReLU
approximation with a (larger) box covering the entire space.  Thus, such bounds
are not of particular use when considering robust classification.

\subparagraph{Outer Bound after Training}
\begin{figure}[t]
\vskip 0.2in
\begin{center}
\centerline{\includegraphics[width=\columnwidth]{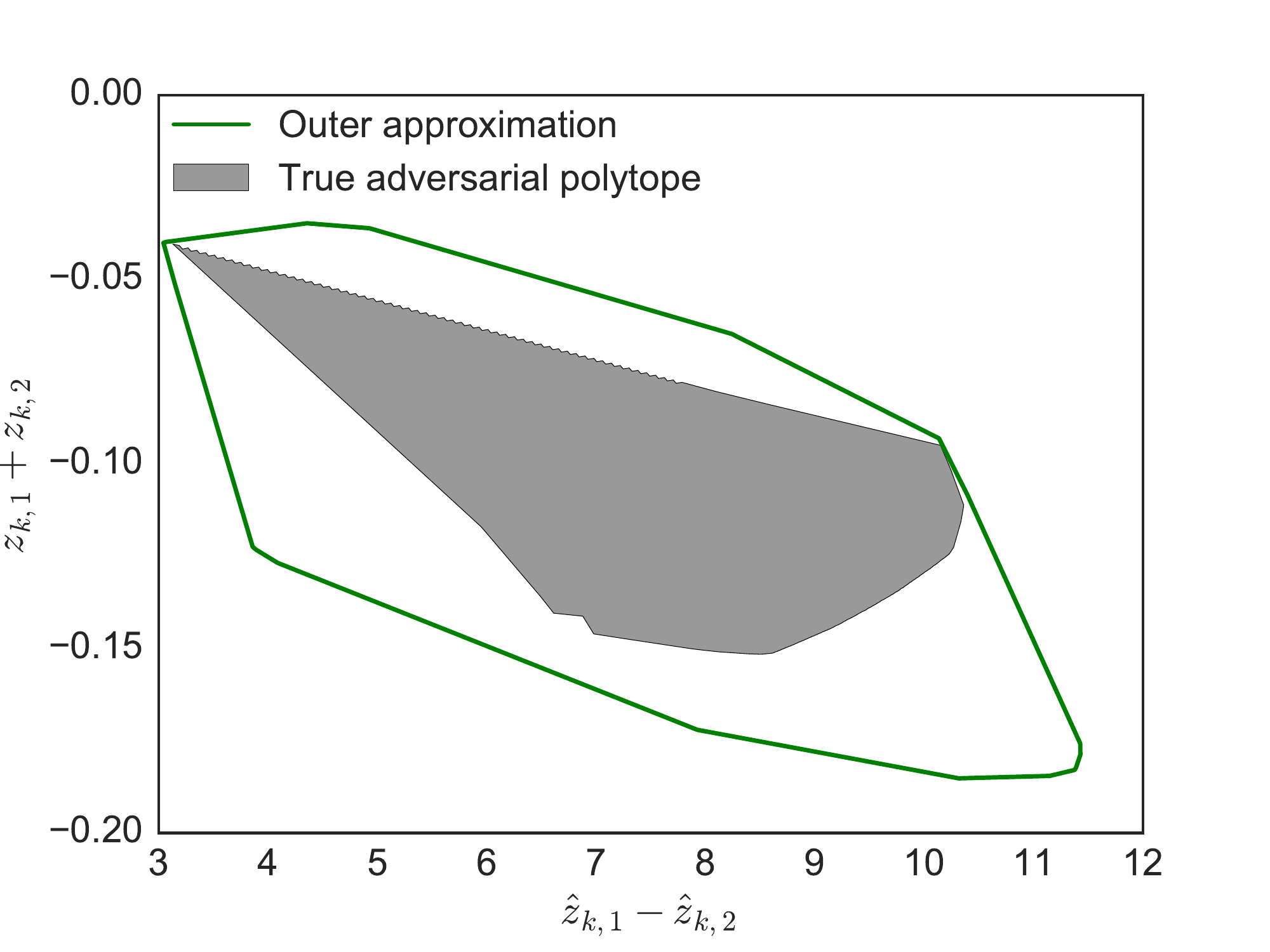}}
  \caption{Illustration of the
  actual adversarial polytope and the convex outer approximation for one of the
  training points after the robust optimization procedure.} 
  \label{fig-robust-polytopes}
\end{center}
\vskip -0.2in
\end{figure}
It is of some interest to see what the true adversarial
polytope for the examples in this data set looks like versus the convex
approximation, evaluated at the solution of the robust optimization problem.
Figure \ref{fig-robust-polytopes} shows one of these figures, highlighting the
fact that for the final network weights and choice of epsilon, the outer bound
is empirically quite tight in this case. In Appendix \ref{app:mnist} we calculate
exactly the gap between the primal problem and the dual bound on the MNIST 
convolutional model. 
In Appendix \ref{app:har}, we will see that when training
on the HAR dataset, even for larger $\epsilon$, the bound is empirically
tight.

\subsection{MNIST}
\label{app:mnist}
\subparagraph{Parameters} We use the Adam optimizer
\citep{kingma2015adam} with a learning rate of 0.001 (the default option) with
no additional hyperparameter selection. We use minibatches of size 50 and 
train for 100 epochs. 

\subparagraph{$\epsilon$ scheduling}
Depending on the random weight initialization of the network, the optimization 
process for training a robust MNIST classifier may get stuck and not converge. 
To improve convergence, it is helpful to start with a smaller value 
of $\epsilon$ and slowly increment it over epochs. For MNIST, all random seeds 
that we observed to not converge for $\epsilon=0.1$ were able to 
converge when started with
$\epsilon=0.05$ and taking uniform steps to $\epsilon=0.1$ in the first half of all epochs (so in this case, 50 epochs).

 \begin{figure}[t]
 \vskip 0.2in
 \begin{center}
 \centerline{\includegraphics[width=\columnwidth]{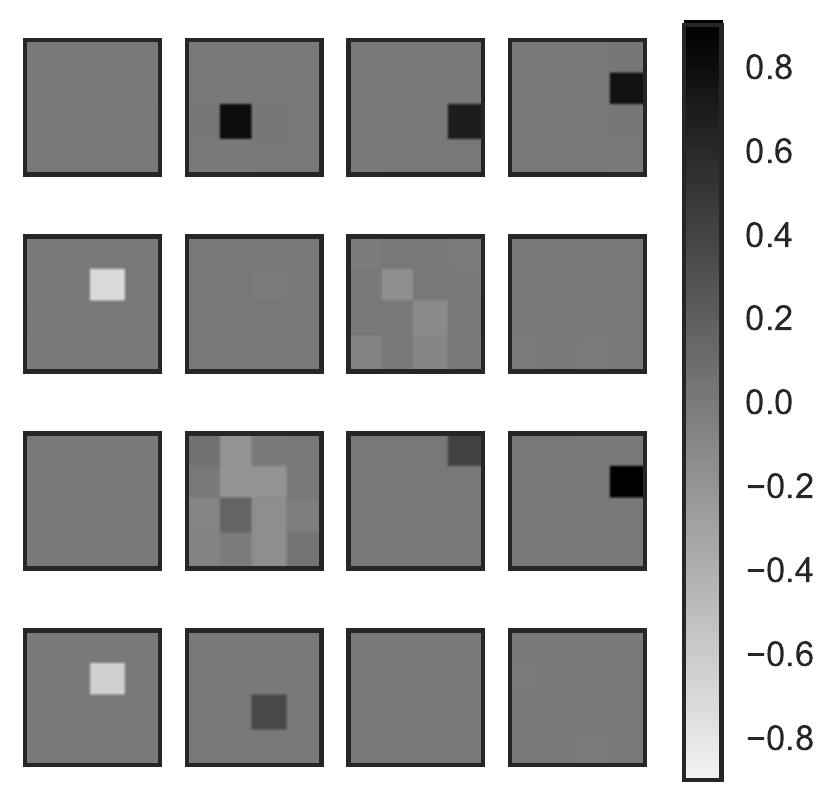}}
 \caption{Learned convolutional filters for MNIST of the first layer of a trained robust convolutional network, which are quite sparse due to the $\ell_1$ term in \eqref{eq:J}.}
 \label{fig:mnist_weights}
 \end{center}
 \vskip -0.2in
 \end{figure}

\begin{figure}[t]
\vskip 0.2in
\begin{center}
\centerline{\includegraphics[width=\columnwidth]{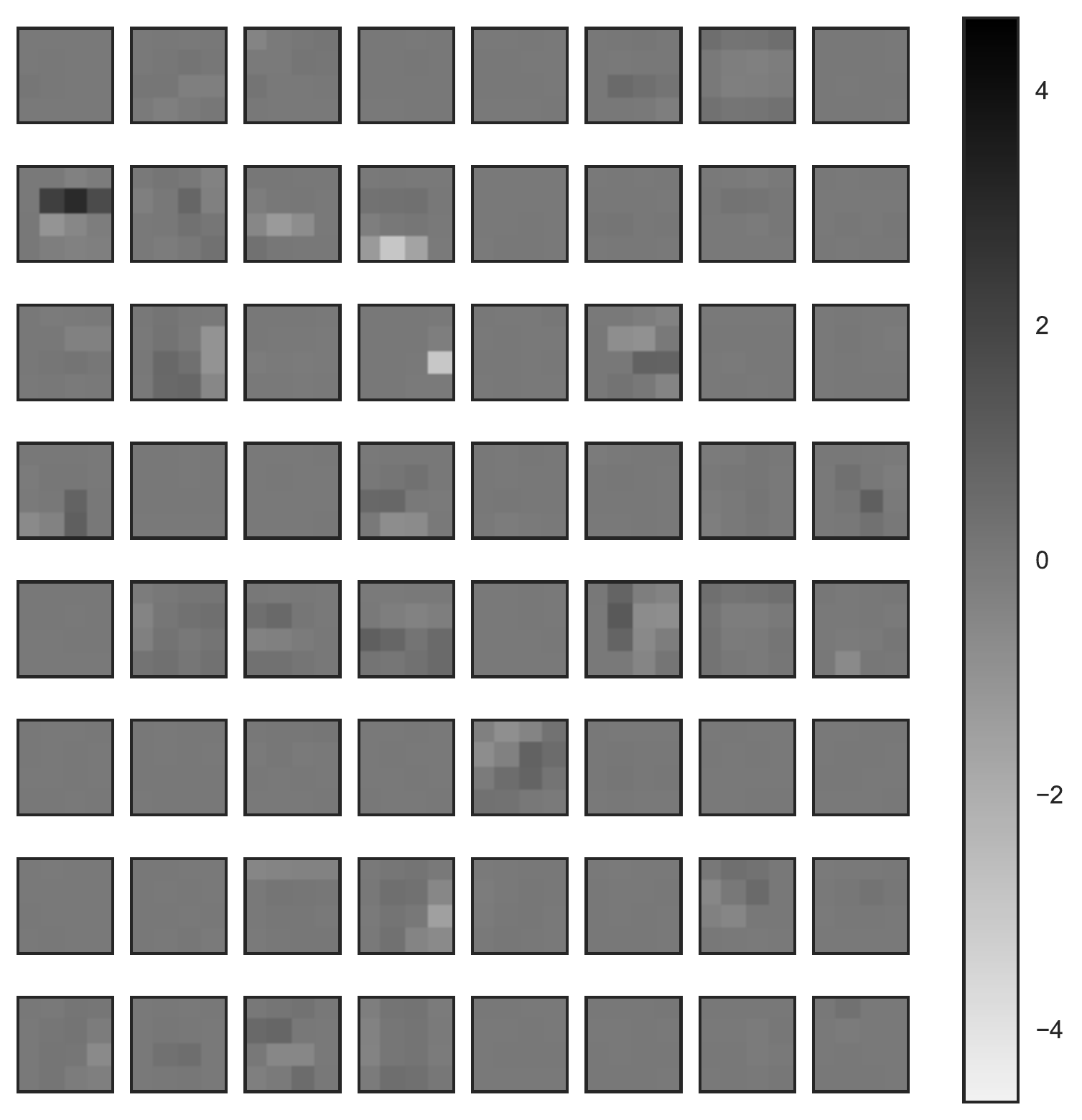}}
\caption{Learned convolutional filters for MNIST of the second layer of a trained robust convolutional network, which are quite sparse due to the $\ell_1$ term in \eqref{eq:J}.}
\label{fig:mnist_weights2}
\end{center}
\vskip -0.2in
\end{figure}
\subparagraph{MNIST convolutional filters}
\label{app:weights}
Random filters from the two convolutional layers of the MNIST classifier 
after robust training are plotted in Figure \ref{fig:mnist_weights2}. 
We see a similar story in both layers: they are highly sparse, 
and some filters have all zero weights. 

\begin{figure}[t]
\vskip 0.2in
\begin{center}
\centerline{\includegraphics[width=\columnwidth]{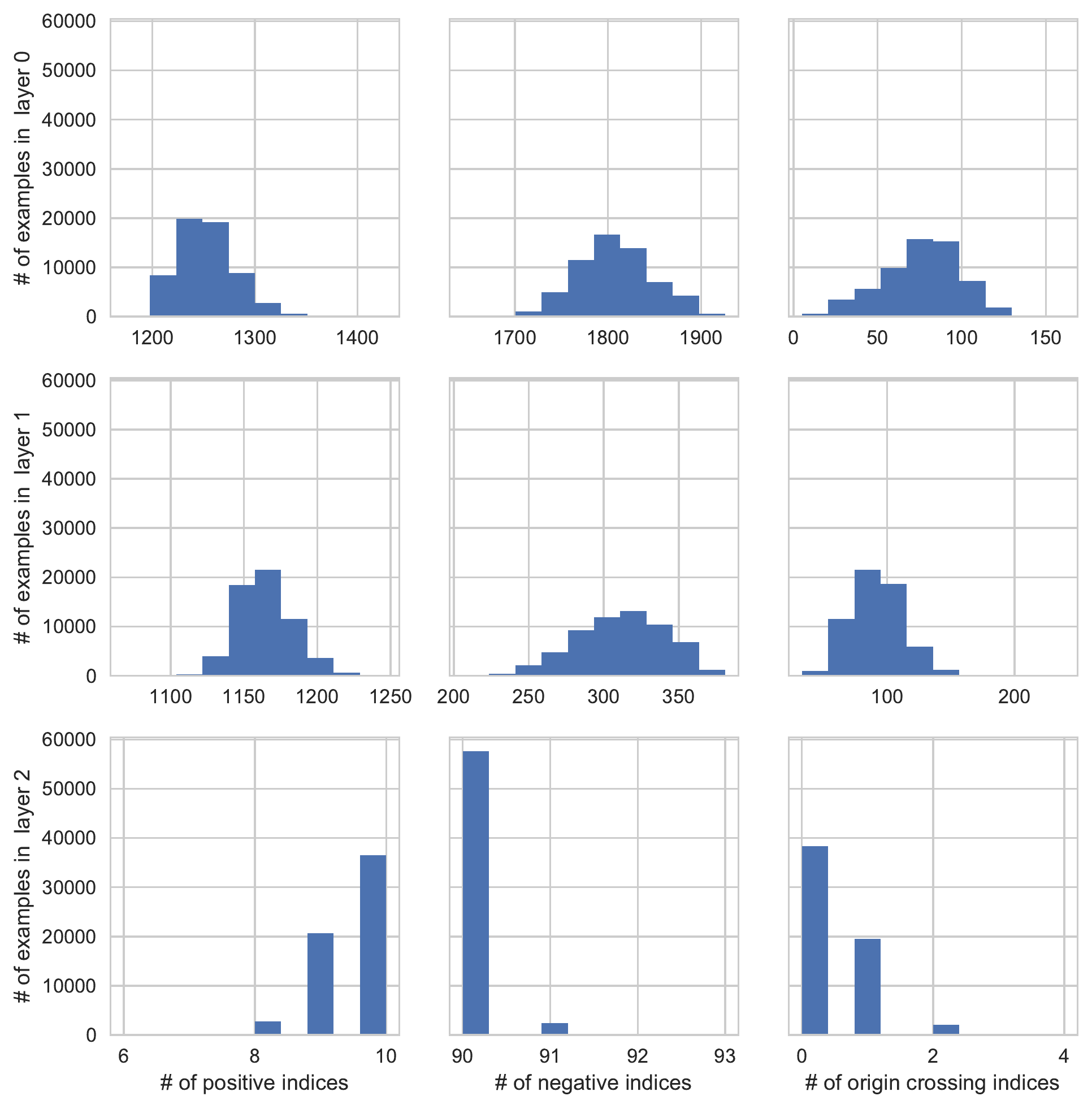}}
\caption{Histograms of the portion of each type of index set (as defined in \ref{eq:d_after_alpha} when passing training examples through the network.}
\label{fig:mnist_activations}
\end{center}
\vskip -0.2in
\end{figure}
\subparagraph{Activation index counts}
\label{app:activations}
We plot histograms to visualize the distributions of pre-activation bounds over examples 
in Figure \ref{fig:mnist_activations}. We see that in the first layer, examples have 
on average 
more than half of all their activations in the $\mathcal{I}^-_1$ set, with a relatively
small number of activations in the $\mathcal{I}_1$ set. The second layer has significantly more
values in the $\mathcal{I}^+_2$ set than in the $\mathcal{I}^-_2$ set, with a comparably small number of
activations in the $\mathcal{I}_{2}$ set. The third layer has extremely few activations in the $\mathcal{I}_{3}$ set, with 90\% all of the activations in the $\mathcal{I}^-_3$ set. Crucially, we see that in all three
layers, the number of activations in the $\mathcal I_i$ set is small, which benefits the
method in two ways: a) it makes the bound
tighter (since the bound is tight for activations through the $\mathcal I^+_i$ and $\mathcal I^-_i$ sets) 
and b) it makes the bound more computationally efficient to compute (since the
last term of \eqref{eq:J} is 
only summed over activations in the $\mathcal{I}_i$ set). 

\begin{figure*}[t]
\vskip 0.2in
\begin{center}
\centerline{\includegraphics[width=2\columnwidth]{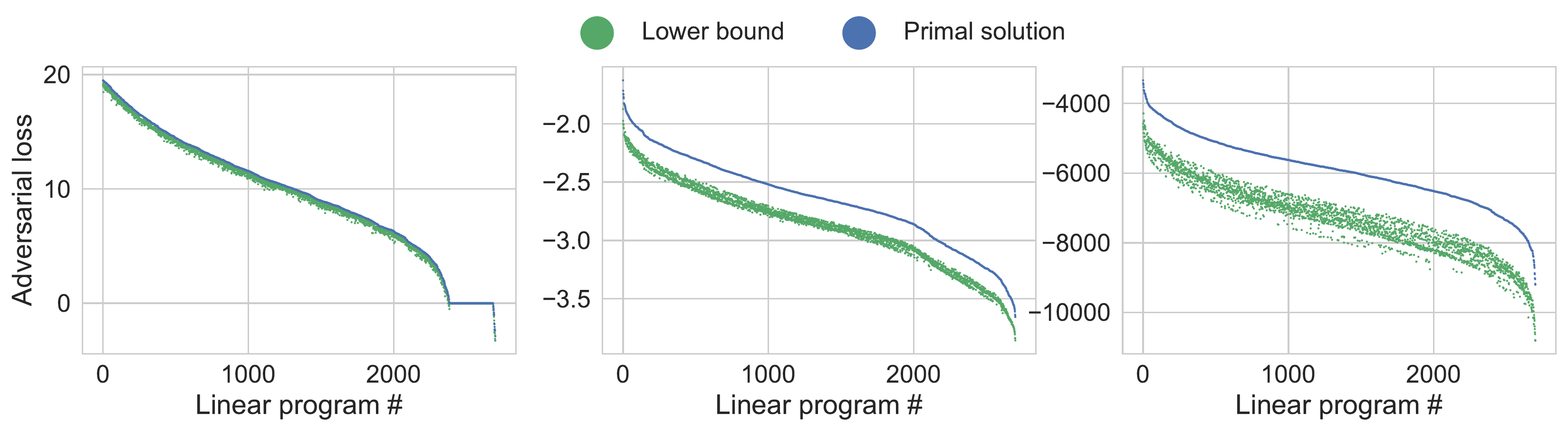}}
\caption{Plots of the exact solution of the primal linear program and the corresponding lower bound from the dual problem for a (left) robustly trained model, (middle) randomly intialized model, and (right) model with standard training.}
\label{fig:primal}
\end{center}
\vskip -0.2in
\end{figure*}

\subparagraph{Tightness of bound}
We empirically evaluate the tightness of the bound by exactly computing 
the primal LP and comparing it to the lower bound computed 
from the dual problem via our method. 
We find that the bounds, when computed on the robustly trained classifier, 
are extremely tight, especially when compared to bounds computed for
random networks and networks that have been trained under standard training, 
as can be seen in Figure \ref{fig:primal}. 

\subsection{Fashion-MNIST}
\subparagraph{Parameters}
We use exactly the same parameters as for MNIST: Adam optimizer with the default 
learning rate 0.001, minibatches of size 50, and trained for 100 epochs.

\begin{figure}[t]
\vskip 0.2in
\begin{center}
\centerline{\includegraphics[width=\columnwidth]{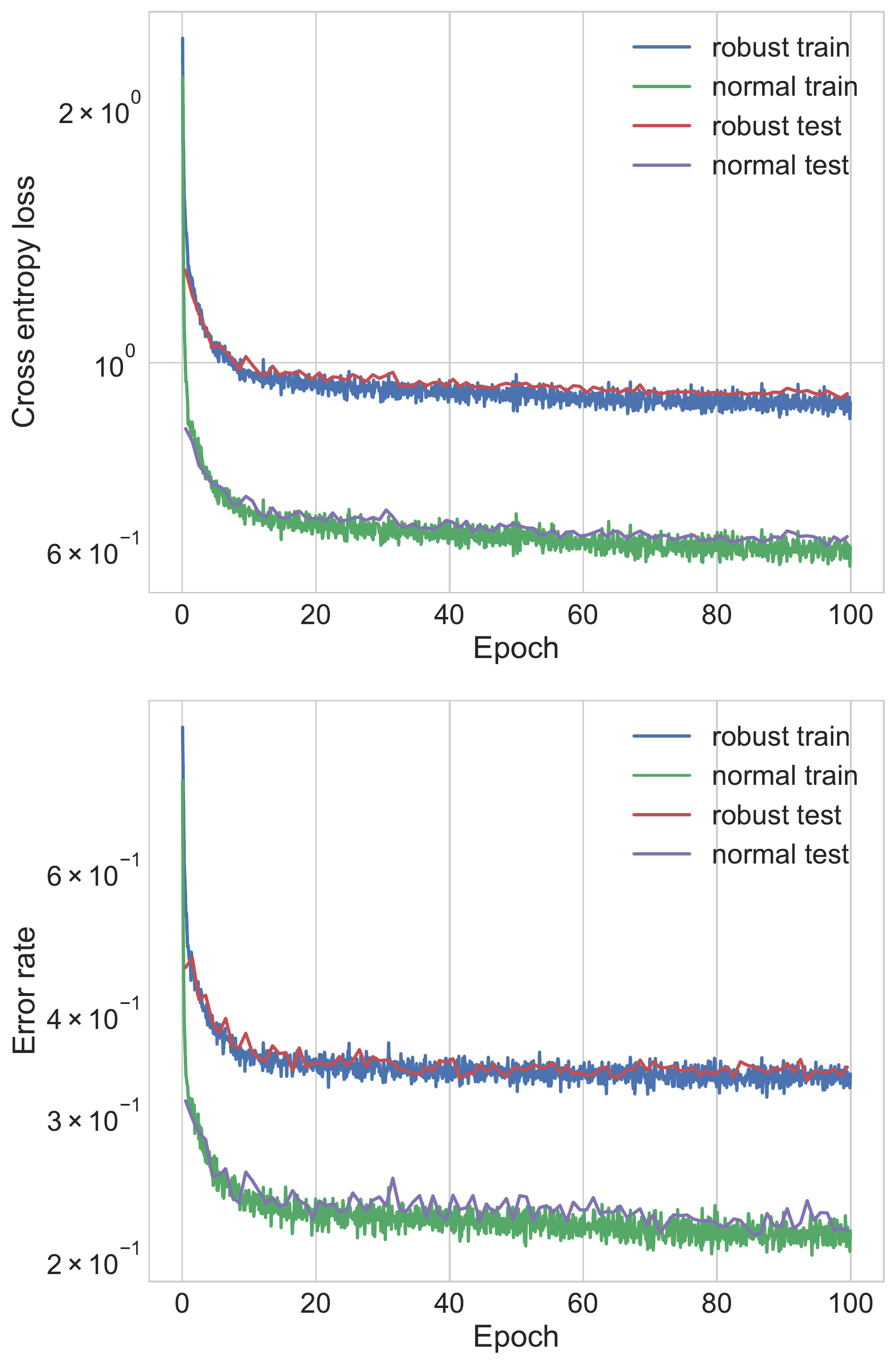}}
\caption{Loss (top) and error rate (bottom) when training a robust convolutional network on the Fashion-MNIST dataset. }
\label{fig:fashion}
\end{center}
\vskip -0.2in
\end{figure}

\subparagraph{Learning curves}
Figure \ref{fig:fashion} plots the error and loss curves (and their robust 
variants) of the model over epochs. We observe no overfitting, and suspect that the performance 
on this problem is limited by model capacity. 
\label{app:fashion}

\begin{figure}[t]
\vskip 0.2in
\begin{center}
\centerline{\includegraphics[width=\columnwidth]{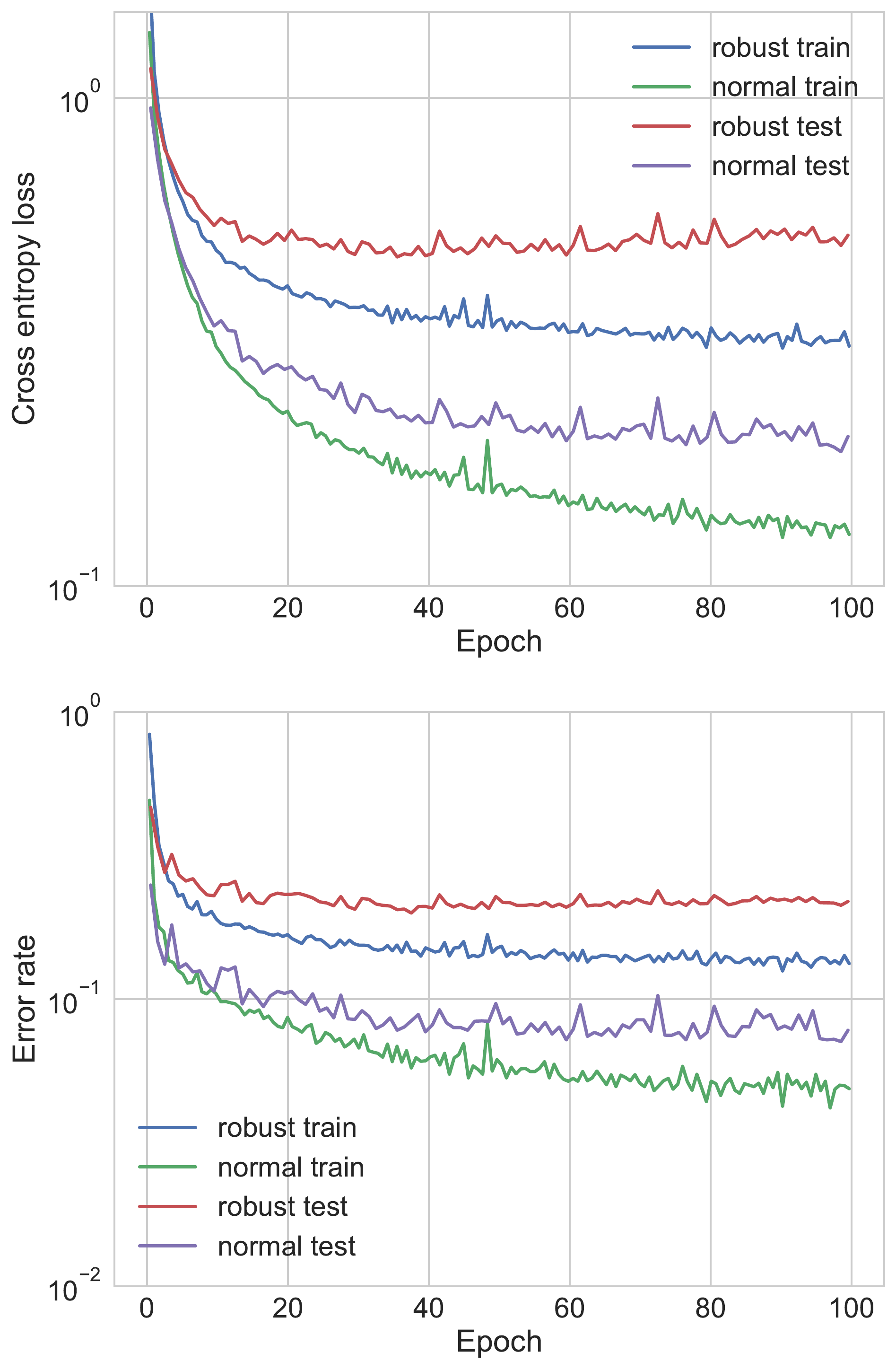}}
\caption{Loss (top) and error rate (bottom) when training a robust fully connected network on the 
HAR dataset with one hidden layer of 500 units. }
\label{fig:har}
\end{center}
\vskip -0.2in
\end{figure}
\subsection{HAR}
\label{app:har}
\subparagraph{Parameters}
We use the Adam optimizer with a learning rate 0.0001,
minibatches of size 50, and trained for 100 epochs. 

\subparagraph{Learning Curves}
Figure \ref{fig:har} plots the error and loss curves (and their robust variants) of the model over epochs.
 The bottleneck here is likely due to the simplicity of the problem
and the difficulty level implied by the value of $\epsilon$, as 
we observed that  
scaling to more more layers in this setting did not help. 

\begin{table}[t]
\caption{Tightness of the bound on a single layer neural network with 500 hidden units 
after training on the HAR dataset with
various values of $\epsilon$. We observe that regardless of how large $\epsilon$ is,
after training, the bound matches the error achievable by FGSM, implying 
that in this case the robust bound is tight.}
\label{tab:har-epsilon}
\vskip 0.15in
\begin{center}
\begin{small}
\begin{sc}
\begin{tabular}{cccccccc}
\toprule
$\epsilon$ & Test error & FGSM error & Robust bound\\
\midrule
0.05    & 9.20\% & 22.20\% & 22.80\% \\
0.1     & 15.74\% & 36.62\% & 37.09\% \\
0.25     & 47.66\% & 64.24\% & 64.47\% \\
0.5     & 47.08\% & 67.32\% & 67.86\% \\
1     & 81.80\% & 81.80\% & 81.80\% \\
\bottomrule
\end{tabular}
\end{sc}
\end{small}
\end{center}
\vskip -0.1in
\end{table}

\subparagraph{Tightness of bound with increasing $\epsilon$}
Earlier, we observed that on random networks, the bound gets progressively looser 
with increasing $\epsilon$ in Figure \ref{fig-polytopes}. 
In contrast, we find that even if we vary
the value of $\epsilon$, 
after robust training on the HAR dataset with a single hidden layer, 
the bound \emph{still} stays quite tight, 
as seen in Table \ref{tab:har-epsilon}. As expected, training a robust
model with larger $\epsilon$ results in a less accurate model since 
the adversarial problem is more difficult (and
potentially impossible to solve for some data points), 
however the key point is that
the robust bounds are extremely close to the achievable error rate by 
FGSM, implying that in this case, the bound is tight. 

\subsection{SVHN}
\label{app:svhn}
\begin{figure}[t]
\vskip 0.2in
\begin{center}
\centerline{\includegraphics[width=\columnwidth]{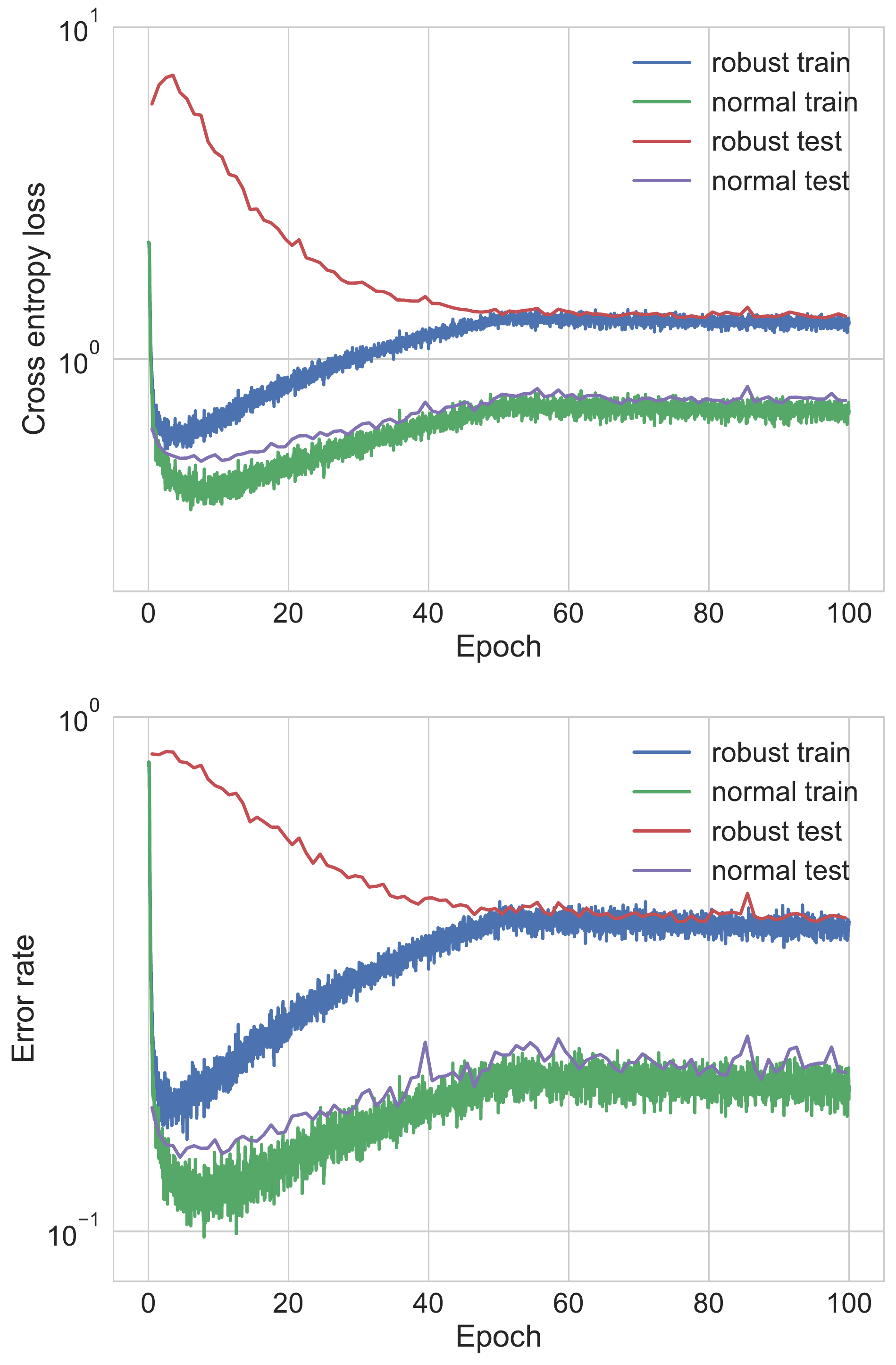}}
\caption{Loss (top) and error rate (bottom) when training a robust convolutional network on the SVHN dataset. The robust test curve is the only curve calculated with $\epsilon=0.01$ throughout; the other 
curves are calculated with the scheduled $\epsilon$ value. }
\label{fig:svhn}
\end{center}
\vskip -0.2in
\end{figure}

\subparagraph{Parameters}
We use the Adam optimizer with the default learning rate 0.001, minibatches of size 20, and trained for 100 epochs. We used an $\epsilon$ schedule which took uniform steps from $\epsilon=0.001$ to $\epsilon=0.01$ over the first 50 epochs. 

\subparagraph{Learning Curves}
Note that the robust testing curve is the only curve calculated with $\epsilon=0.01$ throughout all 100 epochs. 
The robust training curve was computed with the scheduled value of $\epsilon$
at each epoch. We see that all
metrics calculated with the scheduled $\epsilon$ value steadily increase after the first few epochs
until the desired $\epsilon$ is reached. On the other hand, the robust testing metrics for $\epsilon=0.01$ steadily decrease until the desired $\epsilon$ is reached. Since the error rate here increases with
$\epsilon$, it suggests that for the given model capacity, the robust training cannot achieve better 
performance on SVHN, and a larger model is needed. 
\end{document}